\documentclass[sigconf]{acmart}
\AtBeginDocument{%
  }

\copyrightyear{2026}
\acmYear{2026}
\setcopyright{cc}
\setcctype{by}
\acmConference[MM '26]{Proceedings of the 34th ACM International Conference on Multimedia}{November 10--14, 2026}{Rio de Janeiro, Brazil}
\acmBooktitle{Proceedings of the 34th ACM International Conference on Multimedia (MM '26), November 10--14, 2026, Rio de Janeiro, Brazil}
\acmDOI{10.1145/3767308.3835714}
\acmISBN{979-8-4007-2213-4/2026/11}

\usepackage{booktabs}
\usepackage{multirow}
\usepackage{makecell}
\usepackage{algorithm}
\usepackage{algorithmic}
\usepackage{xspace}
\usepackage{amsmath}

\newcommand{\method}{$\operatorname{SynMDiff}$\xspace}

\newtheorem{theorem}{Theorem}

\newtheorem{proposition}{Proposition}
\newtheorem{corollary}{Corollary}
\newtheorem{definition}{Definition}

\usepackage{colortbl}
\definecolor{best}{RGB}{255, 230, 230}
\definecolor{second}{RGB}{230, 240, 255}

\newtheorem*{restatedpropone}{Proposition~\ref{prop:asynchronous}}
\newtheorem*{restatedproptwo}{Proposition~\ref{prop:synchronous}}
\newtheorem*{restatedcor}{Corollary~\ref{cor:restricted}}
\newtheorem*{restatedthm}{Theorem~\ref{theo:consistency}}

\begin{document}

\title{Synchronous Multi-view Neural Diffusion}

\author{Yongquan Shi}
\orcid{0009-0005-1032-3887}

\affiliation{%
	\institution{Fujian Normal University}%
	\city{Fuzhou}%
	\country{China}
}

\affiliation[obeypunctuation=true]{%
	\institution{Fuzhou University},~%
	\city{Fuzhou},~%
	\country{China}
}
\email{losparksayoji@outlook.com}

\author{Weijun Huang}
\affiliation{%
	\institution{Fuzhou University}
	\city{Fuzhou}
	\country{China}}
\email{weijunhuang22@gmail.com}

\author{Yueyang Pi}
\affiliation{%
	\institution{Fuzhou University}
	\city{Fuzhou}
	\country{China}}
\email{piyueyangcc@163.com}

\author{Wendi Zhao}
\affiliation{%
	\institution{Fuzhou University}
	\city{Fuzhou}
	\country{China}}
\email{241010030@fzu.edu.cn}

\author{Yiqing Shi}
\correspondingauthor
\affiliation{%
	\institution{Fujian Normal University}
	\city{Fuzhou}
	\country{China}}
\email{417shelly@gmail.com}

\author{Shiping Wang}
\affiliation{%
	\institution{Fuzhou University}
	\city{Fuzhou}
	\country{China}}
\email{shipingwangphd@163.com}

\renewcommand{\shortauthors}{Yongquan Shi, Weijun Huang, Yueyang Pi, Wendi Zhao, Yiqing Shi and Shiping Wang.}

\begin{abstract}
  Multi-view learning seeks to learn more comprehensive representations by exploiting the complementarity and consistency across diverse modalities or views.
  However, existing multi-view fusion strategies treat intra- and inter-view fusion as independent stages, without simultaneously considering the evolution within views and the dependency across views.
  Such an asynchronous fusion paradigm inevitably constrains cross-view interactions due to conflicting view-specific structural inductive biases. As a result, information flow is prone to distortion and compression along intermediate pathways, confining the model to learn within a restricted solution space.
  To address this, we propose Synchronous Multi-view Neural Diffusion (SynMDiff), which conceptualizes the multi-view feature space as a unified dynamical system driven by a diffusion process.
  By modeling the diffusion flow across arbitrary dyadic feature interactions in a joint space, SynMDiff enables the concurrent and adaptive intra- and inter-view information fusion.
  While a direct implementation of this synchronized mechanism incurs prohibitive computational costs, we further introduce an energy-based topological sampling strategy and an Ego-Net style centralized training architecture, ensuring both efficiency and scalability during learning and inference.
  Due to its conceptual elegance and computational efficacy, evaluations on real-world datasets demonstrate that SynMDiff outperforms the baselines by a large margin\footnote{Code is available at \url{https://github.com/LosparkSayoji/SynMDiff}.}.
\end{abstract}

\begin{CCSXML}
	<ccs2012>
	<concept>
	<concept_id>10010147.10010257.10010282.10011305</concept_id>
	<concept_desc>Computing methodologies~Semi-supervised learning settings</concept_desc>
	<concept_significance>500</concept_significance>
	</concept>
	<concept>
	<concept_id>10010147.10010257.10010293.10010294</concept_id>
	<concept_desc>Computing methodologies~Neural networks</concept_desc>
	<concept_significance>500</concept_significance>
	</concept>
	</ccs2012>
\end{CCSXML}

\ccsdesc[500]{Computing methodologies~Semi-supervised learning settings}
\ccsdesc[500]{Computing methodologies~Neural networks}

\keywords{Multi-view Learning, Graph Neural Networks, Neural Diffusion.}

\maketitle
\hypersetup{pdfauthor={Yongquan Shi, Weijun Huang, Yueyang Pi, Wendi Zhao, Yiqing Shi, Shiping Wang}}

\section{Introduction}
In real-world scenarios, entities are represented in diverse modalities and can be characterized by features from various views.
Typical examples include multi-angle camera arrays in autonomous driving \cite{huang2021bevdet}, text-video pairs in multimedia retrieval \cite{jin2024mv}, and multi-omics profiles in AI-driven drug discovery \cite{wang2021mogonet}.
As a pivotal paradigm, multi-view learning aims to integrate information from diverse views, capturing their inherent consistency and complementarity to yield a more comprehensive representation than any single view can provide \cite{blum1998combining,xu2013survey,gao2015multi,ma2026multi,Zhao2024DynamicGG}.
In practice, the relationships between views are often non-linear and higher-order, posing formidable challenges in modeling these intricate inter-view correlations across heterogeneous feature spaces.

Surprisingly, an in-depth and principled exploration of multi-view fusion strategies has been largely overlooked.
Existing multi-view fusion methods \cite{lu2024towards,shi2025information,wang2025MEGNN} typically rely on independent encoders to extract features from different views, followed by \textit{post-hoc} intra-instance fusion of the resulting representations via learned scalar weights (\textit{e.g.}, attention scores).
In this way, intra- and inter-view fusion are separated into two independent stages, forming a view-decoupled asynchronous fusion paradigm.
We provide an intuitive illustration of this architecture in Fig.~\ref{intro}(a).

\begin{figure}[!t]
	\centering
	\includegraphics[width=0.946\linewidth]{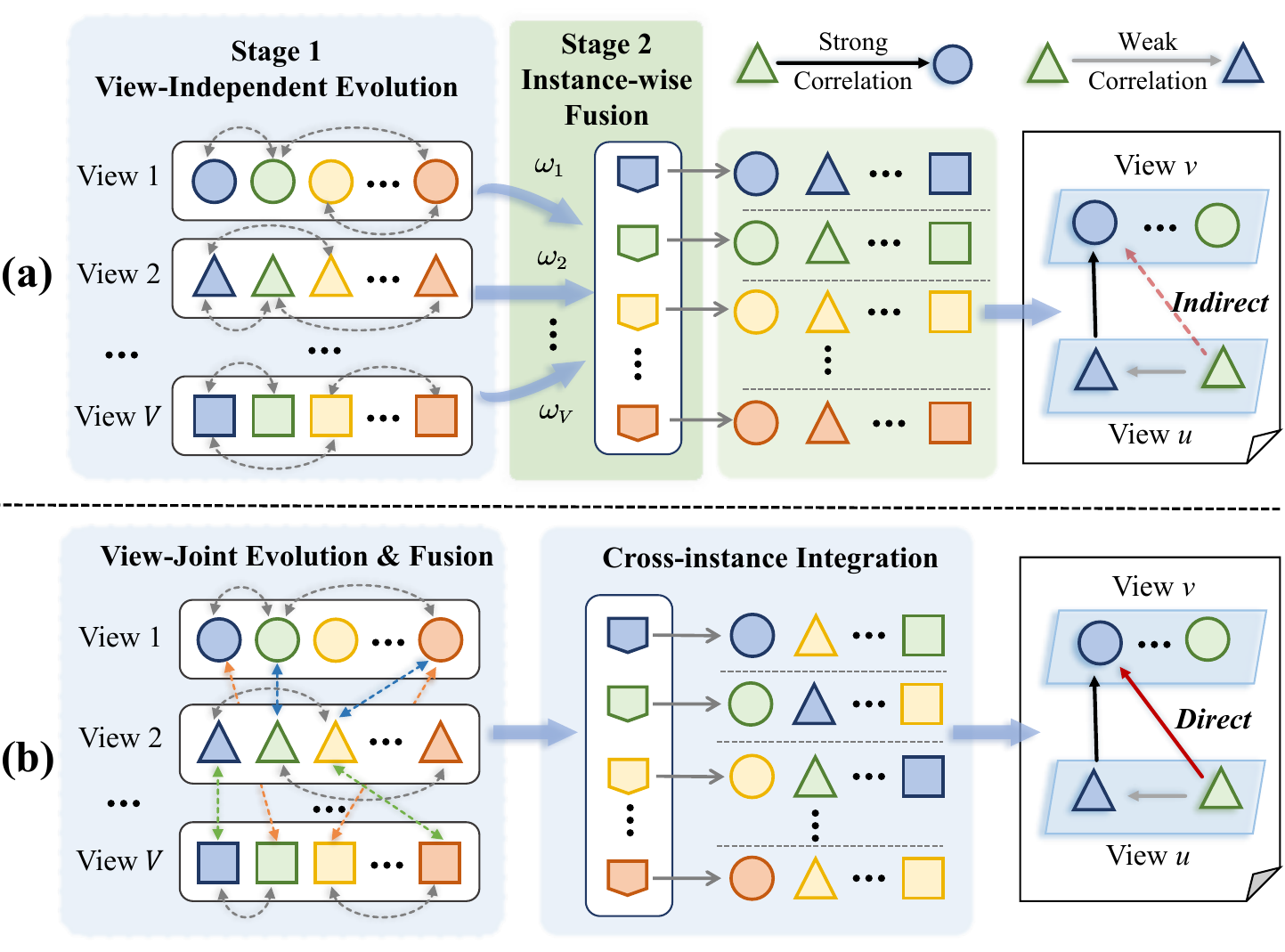}
	\Description{
		A two-panel schematic comparing asynchronous and synchronous
		multi-view fusion. Panel (a) shows view-specific information being
		processed separately and then fused in two stages. Indirect
		cross-view pathways contain conflicts caused by incompatible
		structural inductive biases, resulting in distorted information
		flow. Panel (b) shows the proposed synchronous paradigm, which
		jointly models intra-view and inter-view interactions and provides
		direct information-flow pathways among arbitrary views.
	}
	\caption{
			Comparison of two multi-view fusion paradigms.
			(a) The two-stage asynchronous paradigm does not explicitly model
			cross-view interactions. Conflicts among view-specific structural
			inductive biases can therefore distort information flow along intermediate
			pathways.
			(b) The proposed synchronous paradigm jointly models intra-view and
			inter-view fusion, enabling direct interactions between arbitrary views.
		}
	\label{intro}
\end{figure}

However, despite its conceptual simplicity and ease of implementation, this asynchronous fusion strategy substantially limits the expressive power of the model, as inter-view interactions are inevitably influenced by the structural inductive biases imposed within each view.
More specifically, within such an asynchronous fusion framework, interactions between arbitrary cross-view feature pairs are not explicitly modeled but are instead decomposed into a two-stage process involving intra-view and inter-view operations.
As a result, cross-view information must propagate through indirect pathways, where it is prone to distortion or compression due to \textit{bottleneck effects} \cite{tishby2000information}.
This constitutes an inherently constrained information flow, reflecting structural limitations imposed along the interaction paths.
Consider a simple yet critical scenario: even if sample $i$ in view $v$ is strongly correlated with sample $j$ in view $u$, if samples $i$ and $j$ within view $u$ are weakly correlated or even unrelated, then sample $i$ in view $v$ can hardly access information from sample $j$ through sample $i$ in view $u$.
More critically, such structural conflicts are prevalent in real-world multi-view settings involving heterogeneous data.

To address these limitations, we propose \textbf{Syn}chronous \textbf{M}ulti-view Neural \textbf{Diff}usion (\method), a novel framework that reformulates joint multi-view feature evolution through the lens of continuous-time ordinary differential equations.
Inspired by diffusion dynamics, \method explicitly models arbitrary pairwise feature interactions as adaptive diffusion flows in a unified multi-view space, thereby enabling concurrent intra- and inter-view fusion while mitigating the influence of view-specific inductive biases.
This unified formulation allows information to propagate directly across views without relying on view-specific intermediate representations or predefined interaction pathways.
Despite its theoretical appeal, a direct implementation of synchronous diffusion incurs prohibitive computational overhead.
To address this, we generalize the diffusion process to model information propagation over a complete graph and introduce an energy-based topological sampling mechanism to prune redundant interactions and optimize diffusion pathways.
Furthermore, we design an \textit{Ego-Net}-style centralized architecture \cite{hamilton2017inductive}, enabling \method to achieve efficient training and inference in large-scale scenarios.
Our main contributions are summarized below:
\begin{itemize}
	\item We propose synchronous multi-view neural diffusion to mitigate the influence of view-specific inductive biases when modeling cross-view interactions.

	\item To alleviate the computational overhead, we develop a topological sampling strategy with a centralized training architecture for efficient deployment.

	\item Extensive experiments on real-world datasets across diverse multi-view scenarios demonstrate the effectiveness of the proposed method.
\end{itemize}

\section{Related Work}

\subsection{Multi-view Learning}
There are numerous studies dedicated to facilitating the feature fusion within multi-view settings.
Early efforts in this field used linear weighting mechanisms \cite{tao2017scalable} or learned adaptive scalar weights \cite{nie2017multi,huang2021embedding} to balance the importance of different views. To bridge the gap between disparate feature spaces, several approaches projected multi-view data into a unified low-dimensional subspace via linear transformations \cite{yang2019adaptive} or captured intricate patterns through non-linear subspace learning \cite{xie2018hyper}.
Beyond shallow mappings, Xu et al.~\cite{xu2020deep} incorporate a set of shared parameters alongside view-specific independent encoders to extract cross-view consistencies.
Some research also explored multi-stage training paradigms \cite{xu2023progressive} to refine the coordination between disparate views.
Recently, a burgeoning trend has focused on integrating graph-based models into multi-view frameworks, demonstrating their superior capacity to capture consistency and complementarity via topological structures.
Following the graph-based co-training paradigm of Li et al.~\cite{li2020co}, subsequent work has focused on constructing unified topologies for multi-view data~\cite{chen2023learnable,lu2024generative,chen2023joint,Zhao2025IncompleteAU}, with extensions toward interpretability~\cite{wu2023interpretable} and deeper architectures~\cite{shi2025information}.
These methods either model different views independently or learn a shared subspace, neglecting the dynamic cross-view interactions.

Recently, the latest works \cite{lu2024towards,Lai_Li_Wang_Wang_2026} have attempted to incorporate global attention mechanisms to capture fine-grained interactions within multi-view data. However, these methods still follow a two-stage asynchronous fusion paradigm, where the modeled cross-view interactions are inevitably influenced by view-specific structural inductive biases.

\subsection{Neural Diffusion Models}
Several recent efforts explore diffusion-based learning, where continuous dynamics serve as an inductive bias for representation learning \cite{long2018pde,liu2016learning}. Most of these works formulate models based on continuous diffusion processes governed by Partial Differential Equations (PDEs) \cite{chamberlain2021grand,eliasof2021pde,thorpe2022grand++} or develop novel frameworks from a diffusive perspective \cite{gasteiger2019diffusion,atwood2016diffusion}. In these settings, diffusion is typically interpreted as feature-level interaction or characterized as information propagation over graphs via geometric properties \cite{yang2022geometric}.

Our work follows the trajectory of PDE-inspired modeling, where the key originality lies in:
(i) we introduce neural diffusion flows to efficiently model intricate cross-view interactions among heterogeneous features in multi-view data;
(ii) we propose a synchronous diffusion mechanism that enables the concurrent fusion of intra- and inter-view features.
To the best of our knowledge, this is the first attempt to formulate the multi-view feature space as a unified dynamical system, allowing intra- and inter-view interactions to be captured in a fully synchronized manner.

\section{Preliminary}
We start by studying the diffusion process in a system $\Omega$, which is typically assumed to be a Riemannian manifold \cite{rosenberg1997laplacian}. Informally, the diffusion process describes the movement of mass from regions of higher to lower
concentration.
Consider $x(t):\Omega \to [0,\infty)$ as a family of scalar-valued functions representing temperature distributions, and let $x(p,t)$ denote the temperature at point $p \in \Omega$ at time $t$. The heat diffusion process is described by the following partial differential equation with boundary conditions \cite{chamberlain2021grand}:
\begin{equation}\label{heat diffusion}
	\begin{aligned}
		\frac{\partial x(p, t)}{\partial t} = \text{div}[S(p, x(p, t), t)  \nabla x(p, t)],\\
		\text{s. t. } x(p,0) = x_0(p), \quad t \ge 0.
	\end{aligned}
\end{equation}
Here, $x_0(\cdot)$ denotes the initial temperature distribution, and $S(\cdot)$ is the concentration-dependent diffusion coefficient.
The gradient operator $\nabla$ measures the local temperature variations that drives the flow, while the divergence operator $\text{div}(\cdot)$ aggregates the heat flux at each point.
To account for more practical scenarios, we consider a discrete data space $\mathbf{X}$ with $N$ instances, Eq.~(\ref{heat diffusion}) can be further written as
\begin{equation}\label{feature diffusion}
	\frac{\partial \mathbf{x}_i(t)}{\partial t} = \sum_{j=1}^{N} \mathbf{S}_{ij}(\mathbf{X}(t)) (\mathbf{x}_j(t) - \mathbf{x}_i(t)),
\end{equation}
where feature discrepancies between instances are interpreted as temperature variations.
Such a discrete diffusion process can serve as a
\textit{relational inductive bias} \cite{coifman2006diffusion}, encouraging the model to exploit information across instances.
To solve Eq.~(\ref{feature diffusion}), a commonly approach is to employ the explicit forward Euler scheme, which yields
\begin{equation}\label{diffusion forward}
	\mathbf{x}_i^{(t+1)} = \left(1 - \tau \sum_{j=1}^{N} \mathbf{S}_{ij}^{(t)}\right) \mathbf{x}_i^{(t)} + \tau \sum_{j=1}^{N} \mathbf{S}_{ij}^{(t)} \mathbf{x}_j^{(t)}.
\end{equation}
According to \cite{wudifformer}, for any $\tau \in (0,1)$, the above numerical iteration is guaranteed to converge.
In this context, Eq.~(\ref{diffusion forward}) provides a basic formulation for the diffusion process in discrete feature spaces.

\section{Method}

\subsection{Rethinking the Decoupled Paradigm in Multi-view Fusion}
For a multi-view feature space $\mathcal{X} = \{\mathbf{X}^{(v)} \in \mathbb{R}^{N \times D^{(v)}} \}_{v=1}^V$ comprising $V$ views and $N$ instances, prevalent fusion strategies first process each view with independent encoders, followed by instance-wise aggregation (\textit{e.g.}, learned scalar weights or attention scores), forming a widely adopted paradigm in recent studies \cite{lu2024towards,wang2025MEGNN,shi2025information}.
For analytical clarity, we consider a single interaction step here.
In this context, a unified representation $\mathbf{Z} \in \mathbb{R}^{N \times D}$ is obtained by fusing multi-view features, formulated as
\begin{equation}
	\mathbf{Z} = \mathcal{F} \left( f_{\theta_1}(\mathbf{X}^{(1)}), f_{\theta_2}(\mathbf{X}^{(2)}), \dots, f_{\theta_V}(\mathbf{X}^{(V)}) \right),
\end{equation}
where $\mathcal{F}(\cdot)$ denotes the fusion operator and $f_{\theta_v}(\cdot)$ is a view-specific transformation parameterized by $\theta_v$.
Since distinct views are isolated during feature extraction, with intra- and inter-view information flow being sequentially staged, we characterize this paradigm as View-Decoupled Fusion.
\begin{definition}\label{definition:view_decoupled_fusion}
	(View-Decoupled Fusion)
	A multi-view fusion strategy constitutes View-Decoupled Fusion adhering to the following operator factorization:
	\begin{itemize}
		\item View-Independent Evolution.
		The latent transformation $f_{\theta_v}$ is independent of the state of any external view $u$:
		\begin{equation}
			\frac{\partial f_{\theta_v}(\mathbf{X}^{(v)}_i)}{\partial \mathbf{X}^{(u)}_j} = \mathbf{0}, \quad \forall u \neq v, \forall (i, j).
		\end{equation}

		\item Intra-instance View Integration.
		Cross-view synergy is confined to point-wise mapping over co-indexed fibers:
		\begin{equation}
			\mathbf{z}_i = \mathcal{F} \left( f_{\theta_1}(\mathbf{x}^{(1)}_i), \dots, f_{\theta_V}(\mathbf{x}^{(V)}_i) \right).
		\end{equation}
	\end{itemize}
\end{definition}

\noindent Although this paradigm offers architectural intuitiveness and ease of implementation, it is intrinsically a \textit{post-hoc} hierarchical strategy.
Formally, cross-view interactions are not explicitly modeled; instead, they are implicitly factorized into a two-stage procedure involving intra-view propagation followed by cross-view alignment:
\begin{equation}
	g(x_i^{(v)}, x_j^{(u)}) \approx \phi
	\left[\psi_1(x_i^{(u)}, x_j^{(u)}), \psi_2(x_i^{(u)}, x_i^{(v)})\right].
\end{equation}
Here, $g(\cdot,\cdot)$ denotes the ideal cross-view interaction; $\psi_1(\cdot,\cdot)$ captures intra-view propagation, $\psi_2(\cdot,\cdot)$ models cross-view alignment between co-indexed samples, and $\phi(\cdot,\cdot)$ aggregates these components to produce the final approximation.
Evidently, such an approximation significantly limits the expressive power of the model, causing the learned cross-view interactions to be highly susceptible to conflicting view-specific structural inductive biases, and leading to distortion or compression of information flow along the intermediate pathways.
Transcending this decoupling paradigm, our core idea is to treat all views as a unified whole rather than as independent components, thereby directly and explicitly modeling $g(\cdot,\cdot)$ and formulating a Global Coupled Fusion.
\begin{definition}\label{definition:global_coupled_fusion}
	(Global Coupled Fusion)
	For any instance $i$, a multi-view fusion strategy constitutes Global Coupled Fusion if it is characterized by the following properties:
	\begin{itemize}
		\item View-Joint Evolution.
		The latent transformations $f_{\theta_v}$ exhibit inter-view dependencies:
		\begin{equation}
			\frac{\partial f_{\theta_v}(\mathbf{X}^{(v)}_i)  }{\partial \mathbf{X}^{(u)}_j} \neq \mathbf{0}, \quad \forall (u, v), \forall (i, j).
		\end{equation}

		\item Cross-instance View Interaction.
		View integration is able to capture inter-sample dependencies:
		\begin{equation}
			\mathbf{z}_i = \mathcal{F} \left( \bigcup_{v=1}^V \{ f_{\theta_v}(\mathbf{x}^{(v)}_1), \dots, f_{\theta_v}(\mathbf{x}^{(v)}_N) \} \right).
		\end{equation}
	\end{itemize}
\end{definition}
\noindent Following Definition~\ref{definition:global_coupled_fusion}, we next present a diffusion-based formulation for modeling arbitrary pairwise feature interactions in the multi-view space.

\subsection{Synchronous Multi-view Neural Diffusion}
Recalling Eq.~\eqref{feature diffusion}, feature interactions can be effectively modeled through the lens of diffusion dynamics \cite{wudifformer}.
In previous studies \cite{lu2024towards}, the multi-view diffusion process is interpreted as
\begin{equation}\label{lu diffsuion}
	\begin{aligned}
		\frac{\partial \mathbf{x}^{(v)}_i(t)}{\partial t} &= \sum_{j=1}^{N} \mathbf{S}^{(v)}_{ij}(t) (\mathbf{x}^{(v)}_j(t) - \mathbf{x}^{(v)}_i(t))\\
		&+ \sum_{u=1}^{V} \mathbf{P}_{vu}(t) (\mathbf{x}^{(u)}_i(t) - \mathbf{x}^{(v)}_i(t)),
	\end{aligned}
\end{equation}
where $\mathbf{P}_{vu}(t) \in \mathbb{R}$ denotes a scalar fusion coefficient between the $v$-th and the $u$-th views for instance $i$ at time $t$.
Evidently, this formulation constitutes a View-Decoupled Fusion paradigm, which we term \textit{asynchronous} multi-view diffusion.
\begin{proposition}\label{prop:asynchronous}
	The asynchronous multi-view diffusion described in Eq.~\eqref{lu diffsuion} constitutes a View-Decoupled Fusion paradigm.
\end{proposition}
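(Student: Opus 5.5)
The plan is to discretize Eq.~\eqref{lu diffsuion} with the forward Euler scheme of Eq.~\eqref{diffusion forward}, read off a two-stage operator factorization, and check both items of Definition~\ref{definition:view_decoupled_fusion} against it. With step size $\tau$, one step gives
\begin{equation*}
\mathbf{x}^{(v)}_i(t+1) = \underbrace{\Big(1-\tau\sum_{j}\mathbf{S}^{(v)}_{ij}\Big)\mathbf{x}^{(v)}_i + \tau\sum_{j}\mathbf{S}^{(v)}_{ij}\mathbf{x}^{(v)}_j}_{=:f_{\theta_v}(\mathbf{X}^{(v)})_i} + \tau\sum_{u}\mathbf{P}_{vu}\big(\mathbf{x}^{(u)}_i-\mathbf{x}^{(v)}_i\big).
\end{equation*}
The first term is identified with the view-specific transformation $f_{\theta_v}$. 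The second term, together with the identity carried through, is identified with the fusion operator $\mathcal{F}$.

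For \emph{View-Independent Evolution}, I would argue that $\mathbf{S}^{(v)}$ is the intra-view diffusivity. In \cite{lu2024towards}, and as the notation indicates, it is computed only from $\mathbf{X}^{(v)}(t)$, so it is a function of view $v$ alone. Then $f_{\theta_v}(\mathbf{X}^{(v)})_i$ depends only on $\{\mathbf{x}^{(v)}_j\}_j$, and differentiating with respect to $\mathbf{x}^{(u)}_j$ for $u\neq v$ gives $\mathbf{0}$ for all $(i,j)$.

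For \emph{Intra-instance View Integration}, I would observe that the cross-view term for instance $i$ involves only the co-indexed features $\mathbf{x}^{(u)}_i$, $u=1,\dots,V$, weighted by the scalars $\mathbf{P}_{vu}$. Hence the update of row $i$ in every view is a pointwise map applied to the fiber $(f_{\theta_1}(\cdot)_i,\dots,f_{\theta_V}(\cdot)_i)$. Stacking over $v$, or reading out $\mathbf{z}_i$ by concatenation or pooling, gives exactly the form $\mathbf{z}_i=\mathcal{F}(f_{\theta_1}(\mathbf{x}^{(1)}_i),\dots,f_{\theta_V}(\mathbf{x}^{(V)}_i))$. The fusion never touches any $\mathbf{x}^{(u)}_j$ with $j\neq i$.

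The main obstacle is the ordering. In the Euler step both terms are evaluated at the same state $\mathbf{X}(t)$ and summed, rather than composed. The cross-view term uses the raw $\mathbf{x}^{(u)}_i(t)$ rather than $f_{\theta_u}$ applied to it. I would handle this by writing the update as $\mathcal{F}$ acting on the pair $(f_{\theta_v}(\mathbf{X}^{(v)})_i,\{\mathbf{x}^{(u)}_i\}_u)$, which is still instance-wise, and noting that the argument works the same way under a Lie splitting that first applies the intra-view step and then the cross-view step. A secondary point is that $\mathbf{P}_{vu}(t)$ may depend on the state. If it is computed from instance-$i$ features (or from global view-level statistics treated as constants in the definition's sense), the instance-wise structure is preserved. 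I would state this as the standing assumption inherited from \cite{lu2024towards}.
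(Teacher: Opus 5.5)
Your proposal is correct and is essentially the paper's argument. The paper splits the right-hand side of Eq.~\eqref{lu diffsuion} into the intra-view term $A_i^{(v)}(\mathbf{X}^{(v)}(t))$ and the co-indexed cross-view term $B_i^{(v)}(\mathbf{x}^{(1)}_i(t),\dots,\mathbf{x}^{(V)}_i(t))$, then verifies the two items of Definition~\ref{definition:view_decoupled_fusion} with the same vanishing-derivative checks, working on the continuous-time vector field rather than a forward Euler step; your explicit assumptions on $\mathbf{S}^{(v)}$ and $\mathbf{P}_{vu}$ and your treatment of the sum-versus-composition mismatch make precise what the paper takes ``by construction.''
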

\noindent The proof is given in \textbf{Appendix~\ref{app:proof-prop1}}.
In contrast, we aim to construct a diffusion mechanism that operates across views and samples simultaneously, as formalized in Definition~\ref{definition:global_coupled_fusion}.
Specifically, at each time step, the feature of instance $i$ in the $v$-th view not only participates within the intra-view diffusion, but also contributes as part of the entire multi-view space:
\begin{equation}\label{objective}
	\frac{\partial \mathbf{x}^{(v)}_i(t)}{\partial t} = \sum_{u=1}^{V} \alpha^{(u)}(t) \cdot \mathcal{D}^{(u)}_i(\mathcal{X}(t)),
\end{equation}
where $\alpha^{(u)}$ is the trade-off coefficient across views, $\mathcal{D}^{(u)}_i(\cdot)$ denotes the diffusion operator of instance $i$, which acts on the global state of the multi-view feature space $\mathcal{X}$ and quantifies the information flow contributed by all instances from view $u$.
By further expanding $\mathcal{D}^{(u)}_i(\cdot)$, we can obtain a diffusion formulation analogous to Eq.~(\ref{feature diffusion}):
\begin{equation}\label{global diffusion}
	\!\! \frac{\partial \mathbf{x}^{(v)}_i(t)}{\partial t} \!=\! \sum_{u=1}^{V} \alpha^{(u)}(t) \sum_{j=1}^{N} \mathbf{S}^{(v, u)}_{i, j}(t) (\mathbf{x}^{(u)}_j(t) \!-\! \mathbf{x}^{(v)}_i(t)).
\end{equation}
We refer to this formulation as \textit{synchronous} multi-view diffusion. Here, $\mathbf{S}^{(v, u)}_{i, j}(t)$ represents a dual-coupling weight spanning across both views and instances, which quantifies the influence of the $j$-th instance in view $u$ on the $i$-th instance in view $v$. Such mechanism facilitates concurrent intra- and inter-view information propagation, effectively establishing a global receptive field across the entire multi-view joint space.
Naturally, this synchronous multi-view diffusion mechanism represents a Global Coupled Fusion paradigm.
\begin{proposition}\label{prop:synchronous}
	The synchronous multi-view diffusion characterized by Eq.~\eqref{global diffusion} constitutes a Global Coupled Fusion.
\end{proposition}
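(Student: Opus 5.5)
We verify the two defining properties of Definition~\ref{definition:global_coupled_fusion} directly on the forward-Euler discretization of Eq.~\eqref{global diffusion}. Here the latent transformation $f_{\theta_v}$ is one diffusion step acting on $\mathbf{x}^{(v)}_i$. Following Eq.~\eqref{diffusion forward}, one step with step size $\tau$ gives
\begin{equation*}
	\mathbf{x}^{(v),(t+1)}_i = \Big(1 - \tau\sum_{u=1}^{V}\alpha^{(u)}\sum_{j=1}^{N}\mathbf{S}^{(v,u)}_{i,j}\Big)\mathbf{x}^{(v),(t)}_i + \tau \sum_{u=1}^{V}\alpha^{(u)}\sum_{j=1}^{N}\mathbf{S}^{(v,u)}_{i,j}\,\mathbf{x}^{(u),(t)}_j .
\end{equation*}
We take this update as $f_{\theta_v}(\mathbf{X}^{(v)}_i)$, understood, as in the proof of Proposition~\ref{prop:asynchronous}, as a function of the whole state $\mathcal{X}(t)$. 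The argument mirrors the proof of Proposition~\ref{prop:asynchronous}: there the partial derivatives with respect to $\mathbf{x}^{(u)}_j$ with $u\neq v$, $j\neq i$ vanish identically; here they do not.

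\textbf{Step 1 (View-Joint Evolution).} Differentiate the update with respect to $\mathbf{x}^{(u)}_j$. Treat the coefficients $\mathbf{S}^{(v,u)}_{i,j}$ first as fixed, i.e.\ detached. For $(u,j)\neq(v,i)$ the Jacobian contains the term $\tau\,\alpha^{(u)}\mathbf{S}^{(v,u)}_{i,j}\mathbf{I}$. If the coupling weights are learned as functions of the features, we add a further term $\tau \sum_{u',j'}\alpha^{(u')}\,(\mathbf{x}^{(u')}_{j'}-\mathbf{x}^{(v)}_i)\,\partial \mathbf{S}^{(v,u')}_{i,j'}/\partial \mathbf{x}^{(u)}_j$. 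Under the standing assumption that $\alpha^{(u)}>0$ and $\mathbf{S}^{(v,u)}_{i,j}>0$, the first term is nonzero for every $u,v,i,j$. For example, $\mathbf{S}$ may be a softmax or positive kernel over the complete joint graph, which is exactly the global receptive field described after Eq.~\eqref{global diffusion}.

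\textbf{Step 2 (Cross-instance View Interaction).} Unrolling the update shows that $\mathbf{z}_i$, the readout of the diffused states, depends on $\bigcup_{u}\{\mathbf{x}^{(u)}_1,\dots,\mathbf{x}^{(u)}_N\}$. The output can therefore be written in the form required by Definition~\ref{definition:global_coupled_fusion}, with $\mathcal{F}$ given by the weighted sum above. This contrasts with the co-indexed fiber $\{\mathbf{x}^{(u)}_i\}_u$ in Definition~\ref{definition:view_decoupled_fusion}.

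\textbf{Main obstacle.} The hardest step is Step 1. The term $\tau\alpha^{(u)}\mathbf{S}^{(v,u)}_{i,j}\mathbf{I}$ could in principle be cancelled by the feature-dependent contribution of $\partial\mathbf{S}/\partial\mathbf{x}$, and the case $(u,j)=(v,i)$ needs separate treatment. For the latter, the derivative is $(1-\tau\sum\alpha\mathbf{S})\mathbf{I}$, which is nonzero since $\tau\in(0,1)$ and the weights are normalized. For cancellation, I would argue that the property is structural rather than pointwise. The Jacobian is not identically zero as a function of the state, because at a consensus point where all $\mathbf{x}^{(u')}_{j'}=\mathbf{x}^{(v)}_i$ the extra term vanishes and only the positive diagonal term survives. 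Equivalently, one can fall back on the detached-coefficient interpretation used for Proposition~\ref{prop:asynchronous}.
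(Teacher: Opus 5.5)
Your proposal is correct and follows the paper's proof. Both arguments verify the two properties of Definition~\ref{definition:global_coupled_fusion} by noting that the update of $\mathbf{x}^{(v)}_i$ contains the term $\mathbf{S}^{(v,u)}_{i,j}(\mathbf{x}^{(u)}_j-\mathbf{x}^{(v)}_i)$ for every pair $(u,j)$, and hence depends on the whole collection $\bigcup_u\{\mathbf{x}^{(u)}_1,\dots,\mathbf{x}^{(u)}_N\}$ rather than only on the co-indexed fiber.

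The differences are refinements, not a different route. The paper works with the continuous-time right-hand side $\Phi_i^{(v)}$ rather than the Euler step, and only asserts that the Jacobian is nonzero ``in general.'' Your positivity assumption on $\alpha^{(u)}$ and $\mathbf{S}^{(v,u)}_{i,j}$, together with the consensus-point argument ruling out cancellation from the $\partial\mathbf{S}/\partial\mathbf{x}$ terms, makes that claim precise. One small slip: in the $(u,j)=(v,i)$ case the self-term cancels, so the coefficient is $1-\tau\sum_{(u',j')\neq(v,i)}\alpha^{(u')}\mathbf{S}^{(v,u')}_{i,j'}$, which is still positive.
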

\noindent The proof is given in \textbf{Appendix~\ref{app:proof-prop2}}.
In this context, we can further characterize the relationship between the synchronous and asynchronous diffusion mechanisms.
\begin{corollary}\label{cor:restricted}
	The asynchronous multi-view diffusion in Eq.~\eqref{lu diffsuion} can be regarded as a restricted case of the synchronous multi-view diffusion Eq.~\eqref{global diffusion}.
\end{corollary}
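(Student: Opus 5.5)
We prove the corollary by exhibiting explicit choices of the synchronous coefficients $\alpha^{(u)}(t)$ and $\mathbf{S}^{(v,u)}_{i,j}(t)$ in Eq.~\eqref{global diffusion} that reproduce the right-hand side of Eq.~\eqref{lu diffsuion} exactly, for every $v$, $i$ and $t$. The key observation is that the asynchronous dynamics use only two kinds of interactions, both of which are admissible entries of the dual-coupling tensor $\mathbf{S}^{(v,u)}_{i,j}$:
\begin{itemize}
	\item intra-view, cross-instance pairs $(v,i)\leftarrow(v,j)$;
	\item cross-view, co-indexed pairs $(v,i)\leftarrow(u,i)$.
\end{itemize}
So the asynchronous model is the synchronous one with the coupling tensor supported on this restricted sparsity pattern.

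First, fix the view weights to be uniform, $\alpha^{(u)}(t)\equiv 1$. Since the original parameterization may insist on normalized weights, we could instead take $\alpha^{(u)}=1/V$ and absorb the factor $V$ into $\mathbf{S}$; this rescaling is harmless. Next, define the coupling block by block:
\begin{equation*}
	\mathbf{S}^{(v,u)}_{i,j}(t)=
	\begin{cases}
		\mathbf{S}^{(v)}_{ij}(t), & u=v,\ j\neq i,\\
		\mathbf{P}_{vu}(t), & u\neq v,\ j=i,\\
		0, & u\neq v,\ j\neq i,
	\end{cases}
\end{equation*}
with the diagonal entry $\mathbf{S}^{(v,v)}_{i,i}$ arbitrary. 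The diagonal entry multiplies $\mathbf{x}^{(v)}_i-\mathbf{x}^{(v)}_i=\mathbf{0}$, so it contributes nothing. By the same fact, the terms $j=i$ in the first sum of Eq.~\eqref{lu diffsuion} and $u=v$ in its second sum also vanish.

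Substituting into Eq.~\eqref{global diffusion}, we split the double sum over $(u,j)$ into three groups:
\begin{itemize}
	\item $u=v$, which yields $\sum_j \mathbf{S}^{(v)}_{ij}(\mathbf{x}^{(v)}_j-\mathbf{x}^{(v)}_i)$;
	\item $u\neq v,\ j=i$, which yields $\sum_{u} \mathbf{P}_{vu}(\mathbf{x}^{(u)}_i-\mathbf{x}^{(v)}_i)$;
	\item $u\neq v,\ j\neq i$, which vanishes.
\end{itemize}
The sum of the three groups is exactly the right-hand side of Eq.~\eqref{lu diffsuion}. The two ODE systems therefore share the same vector field, and with a common initial condition $\mathcal{X}(0)$ their trajectories coincide by uniqueness of solutions (assuming Lipschitz coefficients). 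The same conclusion holds for the forward-Euler discretization in Eq.~\eqref{diffusion forward}.

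The only real obstacle is matching the admissible function class. In the synchronous model, $\mathbf{S}^{(v,u)}_{i,j}$ may depend on the global state $\mathcal{X}(t)$. We must check that the restriction, where $\mathbf{S}^{(v)}$ depends only on $\mathbf{X}^{(v)}(t)$ and $\mathbf{P}_{vu}$ on co-indexed features, is a legitimate special case of that class. It is, because a function of a subset of the arguments is also a function of all of them.

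Finally, to justify the word ``restricted'', we note that the inclusion is strict. Choosing $\mathbf{S}^{(v,u)}_{i,j}\neq 0$ for some $u\neq v$, $j\neq i$ produces a nonzero dependence $\partial \dot{\mathbf{x}}^{(v)}_i/\partial \mathbf{x}^{(u)}_j$. By Proposition~\ref{prop:asynchronous}, no asynchronous model can have such a dependence, whereas it is exactly the kind of coupling that Proposition~\ref{prop:synchronous} attributes to the synchronous model.
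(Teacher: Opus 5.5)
Your proposal is correct and follows essentially the same route as the paper. You set $\alpha^{(u)}\equiv 1$, place $\mathbf{S}^{(v)}_{ij}$ on the $u=v$ blocks, put $\mathbf{P}_{vu}$ on the co-indexed cross-view entries and zero elsewhere, then substitute into Eq.~\eqref{global diffusion} and use that the zero-difference terms vanish; your extra remarks on the arbitrary diagonal entry, the admissible coefficient class, ODE trajectories and strictness via Propositions~\ref{prop:asynchronous} and~\ref{prop:synchronous} go beyond what the paper writes but do not change the argument (the Euler remark should cite the synchronous scheme Eq.~\eqref{forward} rather than Eq.~\eqref{diffusion forward}).
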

\noindent
See \textbf{Appendix~\ref{app:proof-cor1}} for the proof.
Similar to Eq.~(\ref{diffusion forward}), we can employ the forward Euler scheme to numerically solve Eq.~(\ref{global diffusion}):
\begin{equation}\label{forward}
	\begin{aligned}
		\! \mathbf{x}^{(v)}_i(t\!+\!1) \!&=\! \left[ 1 \!-\! \tau \sum_{u=1}^{V} \alpha^{(u)}(t) \sum_{j=1}^{N} \mathbf{S}^{(v, u)}_{i, j}(t) \right] \! \mathbf{x}^{(v)}_i(t)\\
		&+ \tau \sum_{u=1}^{V} \alpha^{(u)}(t) \sum_{j=1}^{N} \mathbf{S}^{(v, u)}_{i, j}(t) \mathbf{x}^{(u)}_j(t).
	\end{aligned}
\end{equation}
Typically, the diffusion coefficient here can be efficiently parameterized via an attention mechanism \cite{chamberlain2021grand}:
\begin{equation}
	\mathbf{S}^{(v, u)}_{i, j} (t) = \operatorname{softmax}\left(\frac{(\mathbf{W}_K\mathbf{x}_i^{(v)}(t))^{\mathrm{T}}\mathbf{W}_Q\mathbf{x}_j^{(u)}(t)}{d_k}\right),
\end{equation}
where $\mathbf{W}_K$ and $\mathbf{W}_Q$ are learned matrices, and $d_k$ is a hyperparameter determining the dimension of $\mathbf{W}_K$.
However, a direct implementation of Eq.~\eqref{forward} incurs $O(V^2N^2)$ computational and memory complexity, as it requires storing a dense coefficient matrix and performing exhaustive interactions over the multi-view space.
This quadratic scaling poses a significant scalability bottleneck for large-scale applications, a challenge also observed in \cite{lu2024towards}\footnote{A detailed analysis of the computational cost is provided in \textbf{Appendix~\ref{app:complexity}}.}.
In the following subsections, we will detail the strategies to address these computational hurdles.

\subsection{Energy-based Topological Sampling}
In practice, the diffusion process in discrete feature space can be generalized to information propagation over a complete graph $\mathcal{G}$ \cite{gasteiger2019diffusion}, where the diffusivity coefficient $\mathbf{S}$ serves as a fully connected adjacency matrix.
In this way, the forward process induced by Eq.~(\ref{forward}) admits a graph neural network (GNN)-style formulation\footnote{Applying forward Euler discretization to this ODE yields $\mathbf{X}(k+1)=(\mathbf{I}-\Delta t\,\mathbf{L})\mathbf{X}(k)$, which corresponds to a GNN-style neighborhood aggregation step.}:
\begin{equation}
	\mathbf{x}^{(v)}_i(t + 1) = \operatorname{UPD}  \left(
	\operatorname{AGG}_{u=1,j=1}^{V,N}  \left( \mathbf{x}^{(u)}_{j}(t); \mathbf{S}_{i,j}^{v,u}(t)  \right) \right) ,
\end{equation}
where $\operatorname{UPD}(\cdot)$ denotes an update function typically comprising a linear transformation followed by a non-linear activation, while $\operatorname{AGG}(\cdot)$ represents a global aggregation operator.
In this light, the diffusion process in Eq.~\eqref{global diffusion} can be further interpreted as a gradient flow that minimizes the Dirichlet energy over the view-joint complete graph $\mathcal{G}$ \cite{fu2023implicit}\footnote{Specifically, for $\mathcal{E}(\mathbf{X})=\frac{1}{2}\operatorname{Tr}(\mathbf{X}^{\top}\mathbf{L}\mathbf{X})$, we have $\nabla_{\mathbf{X}}\mathcal{E}(\mathbf{X})=\mathbf{L}\mathbf{X}$. Therefore, the diffusion equation $\frac{\partial\mathbf{X}}{\partial t}=-\mathbf{L}\mathbf{X}=-\nabla_{\mathbf{X}}\mathcal{E}(\mathbf{X})$ is its negative gradient flow, which monotonically decreases the Dirichlet energy toward its minimum.}:
\begin{equation}\label{Dirichlet energy}
	\min \mathcal{E}(\mathcal{X}; \mathcal{G}) = \frac{1}{2} \sum_{(v, i)} \sum_{(u, j)} \mathbf{S}^{(v, u)}_{i, j} \Vert \mathbf{x}_{i}^{(v)} - \mathbf{x}_{j}^{(u)} \Vert^2 .
\end{equation}
Here, the energy functional $\mathcal{E}(\cdot, \cdot)$ characterizes the global smoothness of the multi-view feature space.
By penalizing discrepancies between instance features, it adaptively assigns coupling weights $\mathbf{S}$ to drive the system toward a global consensus, where individual fibers are aligned through the diffusion flow.
Crucially, this minimization objective reveals that the system encourages diffusion among similar instances while suppressing it between dissimilar ones, which directly aligns with the \textit{homophily principle} \cite{mcpherson2001birds}.
However, for a complete graph, there always exist feature pairs with significant discrepancies that are assigned near-zero diffusivity coefficients.
Consequently, $\mathcal{G}$ inevitably introduces noise and deviates markedly from the desired \textit{oracle} topology $\mathcal{G}^*$\footnote{The oracle graph $\mathcal{G}^*$ denotes an ideal, label-driven topology where an edge exists between two instances if and only if they share the same label, and is zero otherwise.}.
Such theoretical insight motivates us to identify and retain informative edges, purging non-essential noise, narrowing the gap between the sampled topology and the oracle graph, and improving diffusion efficiency:
\begin{equation}\label{truncation}
	\!\! \hat{\mathbf{S}} = \mathbf{S} \odot \mathbf{M}(\theta), \quad \mathbf{M}^{(v, u)}_{i, j}(\theta) \!=\! \begin{cases}
		1, & \!\!\!s(\mathbf{x}_{i}^{(v)}, \mathbf{x}_{j}^{(u)}) \ge \theta, \\
		0, & \!\!\!       s(\mathbf{x}_{i}^{(v)}, \mathbf{x}_{j}^{(u)}) < \theta. \end{cases}
\end{equation}
Here, $\mathbf{M}(\theta)$ denotes the sparsity-indicating matrix, and $s(\cdot, \cdot)$ acts as a similarity measure, which can be easily obtained from typical \textit{geometric priors}, such as Euclidean distance or cosine similarity.
By performing diffusion flow modeling on the resulting sparse $\hat{\mathbf{S}}$, the space complexity is reduced from $O(N^2V^2)$ to $O(E)$, where $E$ denotes the number of edges after truncation.
Essentially, the truncation process in Eq.~\eqref{truncation}, which minimizes the Dirichlet energy in Eq.~\eqref{Dirichlet energy}, corresponds to maximum likelihood estimation within a \textit{Gaussian Markov Random Field} \cite{siden2020deep}.
From this statistical perspective, we can further derive an upper bound on the structural consistency error between the truncated adjacency matrix $\hat{S}$ and the oracle adjacency matrix $S^*$ derived from $\mathcal{G}^*$.

\begin{theorem}\label{theo:consistency}
	(Support Recovery Guarantee)
	Let $\mathbf{S}^*$ denote the adjacency matrix induced by the oracle topology $\mathcal{G}^*$, and let $\hat{\mathbf{S}}$ be the truncated adjacency matrix obtained via Eq.~\eqref{truncation}.
	Under the incoherence condition\footnote{In practice, the incoherence condition is generally satisfied, as the intrinsic semantic features of distinct categories tend to be sufficiently decorrelated.},
	the probability of structural mismatch between $\hat{\mathbf{S}}$ and $\mathbf{S}^*$ is bounded as
	\begin{equation}
		\mathbb{P}\left( \operatorname{supp}(\hat{\mathbf{S}}) \neq \operatorname{supp}(\mathbf{S}^*) \right) \le (NV)^2 \exp\left( - C D \theta^2 \right).
	\end{equation}
\end{theorem}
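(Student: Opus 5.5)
The plan is to reduce support recovery to a uniform concentration statement for the pairwise similarities $s(\mathbf{x}_i^{(v)},\mathbf{x}_j^{(u)})$ over all $(NV)^2$ ordered pairs of nodes in $\mathcal{G}$, and then close with a union bound. This matches the $(NV)^2$ prefactor in the bound. I would make the statistical model implicit in the Gaussian Markov Random Field remark explicit. Each feature is written as $\mathbf{x}_i^{(v)}=\boldsymbol{\mu}_{y_i}+\boldsymbol{\varepsilon}_i^{(v)}$, where $\boldsymbol{\mu}_c$ is a class prototype (after a view-specific alignment into a common $D$-dimensional space) and the noise $\boldsymbol{\varepsilon}$ is independent, zero-mean, sub-Gaussian with variance proxy $\sigma^2/D$ per coordinate. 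I would take $s$ to be the normalized inner product $s(\mathbf{a},\mathbf{b})=\langle\mathbf{a},\mathbf{b}\rangle$ with $\|\boldsymbol{\mu}_c\|=1$.

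I would read the incoherence condition as a margin on the population similarities. Same-class pairs satisfy $\bar s_{\mathrm{same}}=\langle\boldsymbol{\mu}_c,\boldsymbol{\mu}_c\rangle=1\ge 2\theta$. Cross-class pairs satisfy $|\langle\boldsymbol{\mu}_c,\boldsymbol{\mu}_{c'}\rangle|\le$ a small constant, so that after suitable rescaling the population similarity exceeds $\theta$ by at least $\theta$ on the support of $\mathbf{S}^*$ and falls below it by at least $\theta$ off the support. The attention weights $\mathbf{S}$ are strictly positive (softmax), so $\operatorname{supp}(\hat{\mathbf{S}})=\operatorname{supp}(\mathbf{M}(\theta))$. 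A mismatch therefore occurs only if some pair has $|s(\mathbf{x}_i^{(v)},\mathbf{x}_j^{(u)})-\bar s_{ij}^{(vu)}|\ge\theta$.

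The core step is a per-pair tail bound of the form $\mathbb{P}(|s-\bar s|\ge\theta)\le 2\exp(-cD\theta^2)$. I would expand
\begin{equation*}
s-\bar s=\langle\boldsymbol{\mu}_{y_i},\boldsymbol{\varepsilon}_j\rangle+\langle\boldsymbol{\varepsilon}_i,\boldsymbol{\mu}_{y_j}\rangle+\langle\boldsymbol{\varepsilon}_i,\boldsymbol{\varepsilon}_j\rangle.
\end{equation*}
The two linear terms are sub-Gaussian with variance proxy $\sigma^2/D$, so Hoeffding's inequality gives tails $\exp(-D\theta^2/(18\sigma^2))$ at level $\theta/3$. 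The bilinear term is a sum of $D$ independent products of sub-Gaussians, hence sub-exponential. Bernstein's inequality, in its Hanson--Wright form, gives $\exp(-c\min(D\theta^2/\sigma^4, D\theta/\sigma^2))$, which is of order $\exp(-c' D\theta^2)$ for $\theta\le 1$. This also needs the diagonal pairs $(v,i)=(u,j)$ and same-instance cross-view pairs to be handled: they are either trivially on the support or treated by the same bound with the squared-norm term centred.

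Finally, a union bound over at most $(NV)^2$ pairs, absorbing the factor $2$ and the constants $\sigma$ and $c$ into $C$, gives $\mathbb{P}(\operatorname{supp}(\hat{\mathbf{S}})\neq\operatorname{supp}(\mathbf{S}^*))\le(NV)^2\exp(-CD\theta^2)$. I expect the main obstacle to be the modelling step rather than the concentration. The statement ties the margin to the threshold $\theta$ itself, so the incoherence condition has to be formulated precisely enough that both $\theta$ and the gap equal $\Theta(\theta)$. I would first try taking $\theta$ as half the population gap between intra- and inter-class similarities. I would also need to justify that heterogeneous views are comparable under $s$, by assuming that the view-specific projections to dimension $D$ preserve the prototypes.
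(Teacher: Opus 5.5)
Your argument is essentially the paper's: the paper also places the mean affinity a margin $c_0\theta$ on either side of the threshold, invokes an entrywise tail bound $\mathbb{P}(|\widetilde{S}_{ab}-\mu_{ab}|\ge t)\le 2\exp(-C_1Dt^2)$, splits into the edge and non-edge cases, and takes a union bound over the $(NV)^2$ entries. The only differences are that the paper simply posits the margin and the concentration inequality as consequences of the incoherence condition, whereas you derive the concentration from an explicit prototype-plus-sub-Gaussian model via Hoeffding and Hanson--Wright, and you correctly obtain the support from $\mathbf{M}(\theta)$ using positivity of the softmax rather than thresholding $\mathbf{S}$ directly. One small adjustment: state the margin as $c_0\theta$ with $c_0<1$, as the paper does, rather than a full $\theta$. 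A full-$\theta$ gap below the threshold forces non-positive cross-class prototype correlations, whereas $c_0\theta$ only requires $\langle\boldsymbol{\mu}_c,\boldsymbol{\mu}_{c'}\rangle\le(1-c_0)\theta$ and $\theta\le 1/(1+c_0)$.
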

\noindent
The proof is deferred to \textbf{Appendix~\ref{app:proof-thm1}}.
Here, $C>0$ is a constant, $D$ denotes the feature dimensionality, $\theta$ characterizes the discriminative gap of the similarity measure in Eq.~\eqref{truncation}, and $\operatorname{supp}(\cdot)$ denotes the support set defining the connectivity of the induced topology.
Theorem \ref{theo:consistency} indicates that as the feature dimensionality $D$ increases, the probability of structural mismatch decays exponentially, thereby providing strong theoretical justification and practical performance guarantees for our sampling strategy.

\subsection{Efficient Training with Ego-Net Style Architecture}
According to the synchronous diffusion mechanism in Eq.~\eqref{global diffusion}, a dual-traversal over the multi-view instance space is required, which incurs prohibitive computational overhead. To address this bottleneck, we develop a centralized fusion architecture. Specifically, we introduce virtual anchors to bridge the features of individual instances across different views:
\begin{equation}\label{egonet}
	\mathcal{T}_i = \{ (\mathbf{h}_i, \mathbf{x}_j^{(v)}) \mid v \in {1, \dots, V}, j \in \mathcal{N}_i^{(v)} \},
\end{equation}
where $\mathbf{h}_i$ is the virtual anchor acts as a proxy to aggregate and relay information across views,
$\mathcal{N}_i^{(v)}$ denotes the neighbors for instance $i$ within view $v$, which is governed by the sparsity-indicating matrix $\mathbf{M}(\theta)$ defined in Eq.~\eqref{truncation},
and $\mathcal{T}_i$ comprises the centralized connectivity pattern that facilitates cross-view information exchange.
This can be viewed as the construction of an \textit{Ego-Net} \cite{hamilton2017inductive} style architecture for each instance.
Through Eq.~(\ref{egonet}), features of any instance across different views are flowed through virtual anchors, transforming the time-consuming cross-view traversal interactions into a centralized paradigm.
Consequently, such strategy allows us to train the model efficiently with \textit{linear} time complexity relative to the number of instances $N$ by using batch processing.
In this way, the synchronized multi-view diffusion process described by Eq.~(\ref{global diffusion}) can be expressed in the following form:

\begin{equation}
	\frac{\partial \mathbf{h}_i(t)}{\partial t} =
	\sum_{v=1}^{V} \alpha ^{(v)}(t)
	\sum_{j \in \mathcal{N}_i^{(v)}}
	\mathcal{T}^{(v)}_{i,j}
	(\mathbf{x}^{(v)}_j(t) - \mathbf{h}_i(t)),
\end{equation}
where $\mathcal{T}^{(v)}_{i,j}$ denotes the diffusion intensity between neighbor $j$ in $v$-th view and the virtual anchor $\mathbf{h}_i$.
Similar to Eq.~(\ref{forward}), we can obtain the final forward diffusion form:
\begin{equation}\label{centralized diffusion}
	\begin{aligned}
		\mathbf{h}_i(t+1) &= \left[1 - \tau \sum_{v=1}^{V} \alpha^{(v)}(t) \sum_{j \in \mathcal{N}_i^{(v)}} \mathcal{T}^{(v)}_{i,j}(t)\right] \mathbf{h}_i(t)\\
		&+ \tau \sum_{v=1}^{V} \alpha^{(v)}(t) \sum_{j \in \mathcal{N}_i^{(v)}} \mathcal{T}^{(v)}_{i,j}(t) \mathbf{x}_j^{(v)}(t).
	\end{aligned}
\end{equation}
Through Eq.~\eqref{centralized diffusion}, the features of each instance across diverse views propagate through the virtual anchors adaptively.\footnote{After the diffusion process, we directly feed the resulting virtual-anchor representations $\mathbf{h}$ into a simple linear classifier to obtain class predictions.}
More importantly, this framework enables the proposed synchronous diffusion mechanism to achieve efficient and scalable information propagation in large-scale scenarios, as demonstrated in subsequent experiments.
The algorithmic procedure of the proposed framework is provided in \textbf{Appendix~\ref{app:algorithm}}.

\section{Empirical Studies}
We evaluate the proposed \method across diverse scenarios, including multi-omics cancer subtyping, heterogeneous graph node classification, and general multi-view classification.
All experiments are repeated five times with different
random seeds and their means and standard deviations are reported.
~\\

\noindent \textbf{Datasets.}
To comprehensively assess the generalization capability of \method, we
conduct extensive evaluations across three representative multi-view
learning scenarios.
Specifically, for multi-omics cancer subtyping, we employ five widely used
cancer cohorts: BRCA, LGG, UCEC, GBMLGG, and TCGA.
For heterogeneous graph node classification, our evaluation incorporates
several well-established benchmarks, including FreeBase, DBLP, IMDB, Yelp,
and AMiner.
For general multi-view classification, we utilize eight commonly used
datasets: Scene15, YouTube, MITIndoor, HW, IAPR, Animals, Caltech, and
ESPGame.
Additionally, we examine the scalability of \method in large-scale settings
using four datasets: NoisyMNIST, YTF, CIFAR-10, and VGGFace.
Detailed statistics are summarized in \textbf{Appendix~\ref{app:datasets}}.
~\\

\begin{table*}[ht]
	\centering
	\caption{Classification results (mean\% $\pm$ std\%) on cancer subtype datasets. Best and second-best results are highlighted in red and blue, respectively. ``OOM'' indicates out-of-memory errors.}
	\label{tab:multi-omics_results}
	\normalsize 
	\setlength{\tabcolsep}{5pt}        
	\renewcommand{\arraystretch}{1.16} 
	\begin{tabular}{lcccccccccc}
		\toprule
		\multirow{2}{*}{Method} & \multicolumn{2}{c}{BRCA} & \multicolumn{2}{c}{LGG} & \multicolumn{2}{c}{UCEC} & \multicolumn{2}{c}{GBMLGG} & \multicolumn{2}{c}{TCGA} \\
		& Macro-F1 & Micro-F1 & Macro-F1 & Micro-F1 & Macro-F1 & Micro-F1 & Macro-F1 & Micro-F1 & Macro-F1 & Micro-F1 \\
		\midrule
		SVM     & 43.6 $\pm$ 0.0 & 70.2 $\pm$ 0.0 & 60.0 $\pm$ 0.0 & 60.0 $\pm$ 0.0 & 28.0 $\pm$ 0.0 & 72.5 $\pm$ 0.0 & 39.4 $\pm$ 0.0 & 52.0 $\pm$ 0.0 & 61.4 $\pm$ 0.0 & 68.0 $\pm$ 0.0 \\
		RF      & 68.8 $\pm$ 0.0 & 80.2 $\pm$ 0.0 & 49.6 $\pm$ 0.0 & 49.8 $\pm$ 0.0 & 28.7 $\pm$ 0.0 & 72.5 $\pm$ 0.0 & 47.0 $\pm$ 0.0 & \cellcolor{second}54.6 $\pm$ 0.0 & 52.0 $\pm$ 0.0 & 67.0 $\pm$ 0.0 \\
		DeepMO  & \cellcolor{second}76.4 $\pm$ 4.9 & \cellcolor{second}81.4 $\pm$ 2.1 & \cellcolor{best}63.7 $\pm$ 3.8 & \cellcolor{best}64.2 $\pm$ 3.1 & \cellcolor{second}53.8 $\pm$ 1.9 & 80.2 $\pm$ 3.1 & \cellcolor{second}51.1 $\pm$ 2.2 & 53.5 $\pm$ 2.4 & 64.6 $\pm$ 6.9 & 73.0 $\pm$ 6.9 \\
		MOGONET & 58.9 $\pm$ 2.6 & 71.6 $\pm$ 1.5 & 61.8 $\pm$ 2.4 & 62.3 $\pm$ 1.8 & 43.7 $\pm$ 0.6 & 75.4 $\pm$ 1.9 & 43.6 $\pm$ 2.7 & 49.2 $\pm$ 1.5 & 38.5 $\pm$ 0.4 & 42.1 $\pm$ 0.7 \\
		MoGCN   & 55.3 $\pm$ 0.7 & 73.6 $\pm$ 0.4 & 33.8 $\pm$ 0.0 & 51.1 $\pm$ 0.0 & 28.0 $\pm$ 0.0 & 71.6 $\pm$ 0.0 & 41.5 $\pm$ 8.4 & 53.0 $\pm$ 6.3 & \cellcolor{second}66.7 $\pm$ 0.4 & \cellcolor{second}73.9 $\pm$ 0.4 \\
		Moanna  & 69.9 $\pm$ 2.0 & 77.5 $\pm$ 1.8 & 60.8 $\pm$ 1.9 & 61.3 $\pm$ 1.8 & 53.4 $\pm$ 2.0 & 81.6 $\pm$ 2.2 & 48.1 $\pm$ 3.0 & 53.1 $\pm$ 4.9 & 51.7 $\pm$ 2.3 & 58.7 $\pm$ 2.2 \\
		MOSGAT  & 57.8 $\pm$ 5.3 & 73.0 $\pm$ 3.2 & 55.0 $\pm$ 2.9 & 58.5 $\pm$ 1.3 & 52.3 $\pm$ 4.1 & \cellcolor{second}82.1 $\pm$ 1.9 & 46.7 $\pm$ 1.5 & 49.6 $\pm$ 1.2 & OOM & OOM \\
		\method & \cellcolor{best}81.4 $\pm$ 1.5 & \cellcolor{best}85.2 $\pm$ 0.8 & \cellcolor{second}62.5 $\pm$ 0.6 & \cellcolor{second}63.1 $\pm$ 0.7 & \cellcolor{best}55.0 $\pm$ 0.4 & \cellcolor{best}85.2 $\pm$ 0.5 & \cellcolor{best}52.7 $\pm$ 1.1 & \cellcolor{best}56.1 $\pm$ 1.0 & \cellcolor{best}70.7 $\pm$ 0.5 & \cellcolor{best}78.0 $\pm$ 0.3 \\
		\bottomrule
	\end{tabular}
\end{table*}

\begin{table*}[ht]
	\centering
	\caption{Node classification results (mean\% $\pm$ std\%) on heterogeneous graph datasets. Best and second-best results are highlighted in red and blue, respectively.}
	\label{tab:hetero_results}
	\normalsize 
	\setlength{\tabcolsep}{5pt}        
	\renewcommand{\arraystretch}{1.16} 
	\begin{tabular}{lcccccccccc}
		\toprule
		\multirow{2}{*}{Method} & \multicolumn{2}{c}{FreeBase} & \multicolumn{2}{c}{DBLP} & \multicolumn{2}{c}{IMDB} & \multicolumn{2}{c}{Yelp} & \multicolumn{2}{c}{AMiner} \\
		& Macro-F1 & Micro-F1 & Macro-F1 & Micro-F1 & Macro-F1 & Micro-F1 & Macro-F1 & Micro-F1 & Macro-F1 & Micro-F1 \\
		\midrule
		GCN    & 44.6 $\pm$ 1.4 & 37.4 $\pm$ 1.1 & 90.1 $\pm$ 0.8 & 91.6 $\pm$ 0.6 & 24.3 $\pm$ 0.2 & 55.4 $\pm$ 0.2 & 52.0 $\pm$ 0.2 & 67.4 $\pm$ 0.9 & 68.4 $\pm$ 0.6 & 81.6 $\pm$ 0.7 \\
		HAN    & 62.1 $\pm$ 2.4 & 48.8 $\pm$ 3.4 & 89.3 $\pm$ 0.4 & 90.4 $\pm$ 0.4 & 23.9 $\pm$ 0.5 & 55.9 $\pm$ 0.8 & 48.3 $\pm$ 0.3 & 48.9 $\pm$ 0.6 & 72.3 $\pm$ 0.6 & 84.8 $\pm$ 0.1 \\
		DMGI   & 54.8 $\pm$ 2.1 & 41.1 $\pm$ 1.9 & 65.7 $\pm$ 0.2 & 71.1 $\pm$ 1.0 & 35.3 $\pm$ 1.0 & 57.3 $\pm$ 0.8 & 51.6 $\pm$ 0.4 & 69.8 $\pm$ 0.2 & 30.3 $\pm$ 0.7 & 65.5 $\pm$ 0.5 \\
		IGNN   & \cellcolor{second}65.1 $\pm$ 0.1 & \cellcolor{second}61.7 $\pm$ 0.2 & 86.8 $\pm$ 0.1 & 87.5 $\pm$ 0.9 & 45.3 $\pm$ 0.3 & 54.8 $\pm$ 0.7 & 64.5 $\pm$ 0.4 & 71.2 $\pm$ 0.6 & 74.5 $\pm$ 0.6 & \cellcolor{second}85.2 $\pm$ 0.3 \\
		MRGCN  & 57.0 $\pm$ 0.3 & 53.9 $\pm$ 0.1 & 89.5 $\pm$ 0.3 & 90.5 $\pm$ 0.6 & 45.2 $\pm$ 0.6 & 47.7 $\pm$ 0.7 & 54.3 $\pm$ 0.4 & 73.7 $\pm$ 0.4 & 73.4 $\pm$ 0.4 & 82.9 $\pm$ 0.4 \\
		SSDCM  & 53.9 $\pm$ 2.9 & 53.9 $\pm$ 2.9 & 89.4 $\pm$ 0.5 & 89.9 $\pm$ 0.9 & 49.4 $\pm$ 0.2 & 59.1 $\pm$ 0.6 & 52.7 $\pm$ 0.8 & 70.2 $\pm$ 0.5 & 26.2 $\pm$ 0.5 & 55.5 $\pm$ 0.8 \\
		MHGCN  & 62.3 $\pm$ 1.1 & 54.4 $\pm$ 1.3 & \cellcolor{second}93.0 $\pm$ 0.4 & \cellcolor{second}93.6 $\pm$ 0.5 & \cellcolor{second}51.5 $\pm$ 0.7 & \cellcolor{second}64.2 $\pm$ 0.3 & 60.9 $\pm$ 0.6 & 73.3 $\pm$ 0.3 & 75.0 $\pm$ 0.5 & 85.2 $\pm$ 0.4 \\
		AMOGCN & 63.6 $\pm$ 0.1 & 49.2 $\pm$ 0.6 & 92.3 $\pm$ 0.3 & 92.8 $\pm$ 0.5 & 50.2 $\pm$ 0.5 & \cellcolor{best}65.1 $\pm$ 0.4 & \cellcolor{second}67.5 $\pm$ 0.6 & 67.5 $\pm$ 0.5 & \cellcolor{second}75.7 $\pm$ 0.4 & 83.4 $\pm$ 0.5 \\
		HMGE   & 62.2 $\pm$ 0.1 & 45.8 $\pm$ 0.1 & 91.6 $\pm$ 0.4 & 92.4 $\pm$ 0.3 & 32.9 $\pm$ 0.3 & 57.4 $\pm$ 0.5 & 58.6 $\pm$ 0.3 & \cellcolor{second}74.8 $\pm$ 0.3 & 73.7 $\pm$ 0.6 & 84.8 $\pm$ 0.3 \\
		\method & \cellcolor{best}66.2 $\pm$ 0.8 & \cellcolor{best}71.3 $\pm$ 0.4
		& \cellcolor{best}93.4 $\pm$ 0.2 & \cellcolor{best}93.8 $\pm$ 0.2
		& \cellcolor{best}53.0 $\pm$ 0.6 & 62.2 $\pm$ 1.0
		& \cellcolor{best}77.7 $\pm$ 0.5 & \cellcolor{best}81.0 $\pm$ 0.6
		& \cellcolor{best}76.2 $\pm$ 0.6 & \cellcolor{best}86.2 $\pm$ 0.2 \\
		\bottomrule
	\end{tabular}
\end{table*}

\noindent \textbf{Compared Methods.}
For multi-omics cancer subtyping, we compare \method against several state-of-the-art baselines, including SVM, Random Forest, DeepMO \cite{lin2020classifying}, MOGONET \cite{wang2021mogonet}, MoGCN \cite{li2022mogcn}, Moanna \cite{lupat2023moanna}, and MOSGAT \cite{wu2024mosgat}.
For heterogeneous graph learning, we benchmark with GCN \cite{kipf2017semi}, HAN \cite{wang2019heterogeneous}, DMGI \cite{park2020unsupervised}, IGNN \cite{gu2020implicit}, MRGCN \cite{huang2020mr}, SSDCM \cite{mitra2021semi}, MHGCN \cite{yu2022multiplex}, AMOGCN \cite{chen2024attributed}, and HMGE \cite{abdous2023hierarchical}.
For general multi-view scenarios, we evaluate with Co-GCN \cite{li2020co}, PDMF \cite{xu2023progressive}, LGCN-FF \cite{chen2023learnable}, ECMGD \cite{lu2024towards}, TUNED \cite{huang2025trusted}, KAMSSM \cite{liao2026static}, and CoGFormer \cite{Lai_Li_Wang_Wang_2026}.
Detailed descriptions of all baseline methods are provided in \textbf{Appendix~\ref{app:baselines}}.
~\\

\noindent \textbf{Experimental Settings.}
All compared methods use the default parameters as specified in their original papers.
For the proposed \method, we adopt fixed network hyperparameters.
Specifically, we use a $512$-dimensional linear layer to align the dimensions of the multi-view data.
The number of attention heads is set to $4$, with the attention layer dimension set to $128$ and the feed-forward network dimension set to $256$.
Dropout is set to $0.2$, weight decay is $1e-5$, and the batch size is fixed at $128$ to ensure training can be conducted on most single GPUs.
We set the maximum number of training epochs to $200$ and adopt the model parameters corresponding to the best validation performance for evaluation.
During training, we use the Adam optimizer and the learning rate is set to a default value of $5e-4$.
Further details of the experiment can be found in \textbf{Appendix~\ref{app:implementation}}.

\subsection{Performance}
\noindent \textbf{Multi-omics Cancer Subtyping.}
Table \ref{tab:multi-omics_results} summarizes the classification results
on cancer subtype datasets, with 10\% of the labeled samples used for
training.
The proposed
\method consistently outperforms the baselines in terms of both Macro-F1
and Micro-F1 on BRCA, UCEC, GBMLGG, and TCGA, while achieving the
second-best performance on LGG.
These results demonstrate the effectiveness of \method in modeling complex
multi-omics dependencies and highlight its potential for cancer subtype
analysis.

~\\

\noindent \textbf{Heterogeneous Graph Node Classification.}
In the node classification setting on heterogeneous graphs, 20\% of the labeled nodes were used for training, with the remaining data split evenly between validation and testing. The quantitative results of the experiments are presented in Table \ref{tab:hetero_results}. As reported, \method consistently outperforms the baselines in both metrics across the FreeBase, DBLP, Yelp, and AMiner datasets, and achieves the best Macro-F1 score on the IMDB dataset.
These results further confirm the superiority of the proposed \method in handling real-world heterogeneous data and relationships.
~\\

\begin{table*}[htbp]
	\centering
	\caption{Classification results (mean\% $\pm$ std\%) on multi-view datasets. Best and second-best results are highlighted in red and blue, respectively.}
	\label{tab:Mv_results}
	\normalsize 
	\setlength{\tabcolsep}{5pt}        
	\renewcommand{\arraystretch}{1.17} 
	\begin{tabular}{clcccccccc}
		\toprule
		Metrics & Methods & Co-GCN & PDMF & LGCN-FF & ECMGD & TUNED & KAMSSM & CoGFormer & SynMDiff \\
		\midrule
		\multirow{8}{*}{Macro-F1}
		& Scene15   & 18.9 $\pm$ 8.7 & 39.8 $\pm$ 4.6 & 42.3 $\pm$ 5.7 & 69.3 $\pm$ 4.3 & 70.0 $\pm$ 3.0 & 67.8 $\pm$ 0.8 & \cellcolor{second}74.0 $\pm$ 0.7 & \cellcolor{best}81.1 $\pm$ 0.3 \\
		& YouTube   & 43.4 $\pm$ 4.0 & 36.9 $\pm$ 3.3 & 42.3 $\pm$ 5.7 & 59.0 $\pm$ 0.4 & 57.3 $\pm$ 0.9 & 56.5 $\pm$ 1.1 & \cellcolor{second}63.6 $\pm$ 1.4 & \cellcolor{best}70.5 $\pm$ 0.5 \\
		& MITIndoor & \cellcolor{second}51.8 $\pm$ 0.8 & 48.9 $\pm$ 0.3 & 21.1 $\pm$ 7.8 & 36.5 $\pm$ 8.1 & 22.4 $\pm$ 3.4 & 25.8 $\pm$ 3.0 & 51.2 $\pm$ 2.3 & \cellcolor{best}55.2 $\pm$ 0.8 \\
		& HW        & 94.9 $\pm$ 2.0 & 90.0 $\pm$ 2.3 & 91.5 $\pm$ 2.8 & 95.4 $\pm$ 0.0 & 88.9 $\pm$ 1.6 & 96.3 $\pm$ 0.2 & \cellcolor{second}96.6 $\pm$ 0.2 & \cellcolor{best}97.2 $\pm$ 0.2 \\
		& IAPR      & 55.8 $\pm$ 3.8 & 60.1 $\pm$ 0.7 & 57.0 $\pm$ 1.4 & 65.5 $\pm$ 0.2 & 64.1 $\pm$ 4.4 & 65.9 $\pm$ 0.2 & \cellcolor{second}66.2 $\pm$ 0.2 & \cellcolor{best}70.2 $\pm$ 0.1 \\
		& Animals   & 61.9 $\pm$ 4.3 & 70.6 $\pm$ 0.3 & 62.9 $\pm$ 6.2 & 74.8 $\pm$ 0.4 & 74.7$\pm$ 0.6 & 70.9 $\pm$ 0.6 & \cellcolor{second}77.3 $\pm$ 0.2 & \cellcolor{best}79.1 $\pm$ 0.2 \\
		& Caltech   & 9.8 $\pm$ 1.4  & 15.3 $\pm$ 0.7 & 33.4 $\pm$ 0.5 & 35.3 $\pm$ 0.4 & 32.9 $\pm$ 0.8 & 27.8 $\pm$ 0.4 & \cellcolor{second}37.6 $\pm$ 0.4 & \cellcolor{best}38.2 $\pm$ 0.4 \\
		& ESPGame   & 67.3 $\pm$ 2.3 & 81.1 $\pm$ 0.8 & 68.7 $\pm$ 0.4 & \cellcolor{second}82.7 $\pm$ 0.1 & 77.6 $\pm$ 1.6 & 82.5 $\pm$ 1.4 & 77.8 $\pm$ 0.3 & \cellcolor{best}83.9 $\pm$ 0.6 \\
		\midrule
		\multirow{8}{*}{Micro-F1}
		& Scene15   & 31.5 $\pm$ 7.7 & 39.8 $\pm$ 4.6 & 50.1 $\pm$ 4.4 & 75.3 $\pm$ 0.4 & 72.9 $\pm$ 1.9 & 68.8 $\pm$ 0.8 & \cellcolor{second}75.5 $\pm$ 0.6 & \cellcolor{best}81.0 $\pm$ 0.4 \\
		& YouTube   & 45.1 $\pm$ 2.3 & 36.9 $\pm$ 3.3 & 47.3 $\pm$ 1.8 & 59.4 $\pm$ 0.4 & 59.0 $\pm$ 0.7 & 56.9 $\pm$ 1.1 & \cellcolor{second}63.8 $\pm$ 1.2 & \cellcolor{best}70.7 $\pm$ 0.4 \\
		& MITIndoor & \cellcolor{second}52.9 $\pm$ 0.8 & 48.2 $\pm$ 0.2 & 23.1 $\pm$ 7.3 & 38.2 $\pm$ 6.4 & 30.0 $\pm$ 3.8 & 25.5 $\pm$ 3.0 & 52.5 $\pm$ 1.7 & \cellcolor{best}56.8 $\pm$ 0.7 \\
		& HW        & 95.0 $\pm$ 2.0 & 90.0 $\pm$ 2.3 & 92.6 $\pm$ 0.1 & 95.6 $\pm$ 0.4 & 88.9 $\pm$ 1.6 & 96.3 $\pm$ 0.2 & \cellcolor{second}96.6 $\pm$ 0.2 & \cellcolor{best}97.2 $\pm$ 0.2 \\
		& IAPR      & 54.8 $\pm$ 3.9 & 57.5 $\pm$ 0.8 & 55.8 $\pm$ 1.0 & 64.5 $\pm$ 0.2 & 63.6 $\pm$ 3.6 & \cellcolor{second}64.9 $\pm$ 0.2 & 64.8 $\pm$ 0.2 & \cellcolor{best}69.2 $\pm$ 0.1 \\
		& Animals   & 71.2 $\pm$ 3.6 & 78.9 $\pm$ 0.2 & 74.2 $\pm$ 4.1 & 81.1 $\pm$ 0.4 & 81.3 $\pm$ 0.3 & 77.2 $\pm$ 0.5 & \cellcolor{second}83.4 $\pm$ 0.1 & \cellcolor{best}84.2 $\pm$ 0.2 \\
		& Caltech   & 34.3 $\pm$ 0.9 & 15.3 $\pm$ 0.7 & 40.2 $\pm$ 0.8 & 54.4 $\pm$ 0.3 & 54.4 $\pm$ 0.6 & 48.0 $\pm$ 0.5 & \cellcolor{second}56.5 $\pm$ 0.2 & \cellcolor{best}58.0 $\pm$ 0.3 \\
		& ESPGame   & 67.4 $\pm$ 2.4 & 81.2 $\pm$ 0.8 & 68.8 $\pm$ 0.4 & 82.8 $\pm$ 0.1 & 78.4 $\pm$ 1.5 & \cellcolor{second}82.9 $\pm$ 1.4 & 77.9 $\pm$ 0.3 & \cellcolor{best}84.2 $\pm$ 0.5 \\
		\bottomrule
	\end{tabular}
\end{table*}

\begin{table*}[htbp]
	\centering
	\caption{
		Classification results (mean\% $\pm$ std\%), inference time (s), and memory usage (MB) on large-scale datasets. The best and second-best results are highlighted in red and blue, respectively. ``OOM'' indicates out-of-memory errors.
	}
	\label{tab:large_results}
	\footnotesize 
	\setlength{\tabcolsep}{3pt}        
	\renewcommand{\arraystretch}{1.3} 
	\begin{tabular}{lcccccccccccccccc}
		\toprule
		Dataset & \multicolumn{4}{c}{NoisyMNIST} & \multicolumn{4}{c}{YTF} & \multicolumn{4}{c}{CIFAR-10} & \multicolumn{4}{c}{VGGFace} \\
		Samples & \multicolumn{4}{c}{70,000} & \multicolumn{4}{c}{286,006} & \multicolumn{4}{c}{50,000} & \multicolumn{4}{c}{34,027} \\
		\midrule
		Metric & Macro-F1 & Micro-F1 & Time & Mem & Macro-F1 & Micro-F1 & Time & Mem & Macro-F1 & Micro-F1 & Time & Mem & Macro-F1 & Micro-F1 & Time & Mem \\
		\midrule
		Co-GCN    & 31.5 $\pm$ 7.7 & 31.5 $\pm$ 7.7 & 5.0 & 128.0 & 85.5 $\pm$ 0.2 & 88.2 $\pm$ 0.2 & 37.7 & 201.8 & 97.2 $\pm$ 0.4 & 97.2 $\pm$ 0.4 & 7.4 & 216.0 & 32.0 $\pm$ 0.4 & 32.8 $\pm$ 0.4 & 4.5 & 196.3 \\
		PDMF      & \cellcolor{second}94.1 $\pm$ 0.7 & \cellcolor{second}94.2 $\pm$ 0.7 & 2.3 & 2834.2 & 55.8 $\pm$ 0.3 & 60.6 $\pm$ 0.2 & 11.2 & 3544.9 & 90.9 $\pm$ 0.0 & 90.9 $\pm$ 0.0 & 1.8 & 2973.6 & 47.0 $\pm$ 0.4 & 46.4 $\pm$ 0.4 & 11.1 & 2949.1 \\
		LGCNFF    & OOM & OOM & - & - & OOM & OOM & - & - & OOM & OOM & - & - & OOM & OOM & - & - \\
		ECMGD     & OOM & OOM & - & - & OOM & OOM & - & - & OOM & OOM & - & - & OOM & OOM & - & - \\
		CoGFormer & OOM & OOM & - & - & OOM & OOM & - & - & OOM & OOM & - & - & OOM & OOM & - & - \\
		TUNED     & 86.5 $\pm$ 1.7 & 86.8 $\pm$ 1.7 & 10.6 & 42.1 & \cellcolor{second}98.6 $\pm$ 0.1 & \cellcolor{second}98.9 $\pm$ 0.1 & 91.9 & 76.1 & 98.9 $\pm$ 0.1 & 98.9 $\pm$ 0.1 & 12.4 & 124.6 & \cellcolor{second}50.9 $\pm$ 0.2 & \cellcolor{second}51.5 $\pm$ 0.2 & 9.8 & 60.1 \\
		KAMSSM    & 79.3 $\pm$ 0.7 & 79.6 $\pm$ 0.3 & 0.4 & 1286.2 & 78.4 $\pm$ 3.6 & 82.2 $\pm$ 2.8 & 2.0 & 10049.7 & \cellcolor{second}99.1 $\pm$ 0.1 & \cellcolor{second}99.1 $\pm$ 0.1 & 0.4 & 1610.1 & 33.4 $\pm$ 0.8 & 33.6 $\pm$ 0.7 & 0.3 & 999.0 \\
		\method   & \cellcolor{best}98.0 $\pm$ 0.0 & \cellcolor{best}98.0 $\pm$ 0.0 & 0.7 & 877.6 & \cellcolor{best}100.0 $\pm$ 0.0 & \cellcolor{best}100.0 $\pm$ 0.0 & 3.6 & 5886.0 & \cellcolor{best}99.4 $\pm$ 0.0 & \cellcolor{best}99.4 $\pm$ 0.0 & 0.6 & 1428.0 & \cellcolor{best}57.3 $\pm$ 0.3 & \cellcolor{best}57.4 $\pm$ 0.3 & 0.5 & 747.4 \\
		\bottomrule
	\end{tabular}
\end{table*}

\noindent \textbf{Multi-view Classification.}
For the multi-view classification setting, we use 10\% of the labeled data for training, 10\% for validation, and the remaining data for testing. As shown in Table \ref{tab:Mv_results}, our method outperforms a range of multi-view fusion baselines across all datasets.
This demonstrates the significant superiority of our synchronized diffusion mechanism in capturing the complex relationships inherent in multi-view data.
Notably, on the YouTube, IAPR, and ESPGame datasets, which involve multi-modal representations, our method also shows clear performance gains.
~\\

\noindent \textbf{Large-Scale Scenarios.}
To validate the efficiency and scalability of our method during both training and inference, which are critical for real-world deployment, we conduct experiments on four large-scale datasets. The results, inference time, and memory usage across different methods are reported in Table~\ref{tab:large_results}.
Many advanced multi-view fusion methods require storing the complete topological structure, making them difficult to scale and often leading to out-of-memory errors on large datasets. Although methods such as KAMSSM and TUNED can be applied in these settings, they underutilize multi-view information, resulting in suboptimal performance.
In contrast, the proposed \method remains scalable, achieving the best performance across all four large-scale datasets while incurring lower inference latency and memory consumption, demonstrating its practical suitability for resource-constrained and large-scale real-world applications.

\subsection{Visualization}
For a more intuitive comparison, Figure~\ref{fig:TSNE} presents t-SNE visualizations of the embeddings learned by different methods on three representative datasets.
In addition, we quantitatively evaluate the results using homogeneity and completeness scores \cite{peng2021maximum}.
Across all settings, the embeddings learned by \method demonstrate more compact clusters and clearer inter-class separation compared to the baselines, and achieve the highest scores on both homogeneity and completeness.
This advantage is more pronounced on the TCGA dataset, which involves a larger number of samples and classes, thus posing a more challenging task.
These visualizations further demonstrate that, due to its ability to capture synchronized cross-view interactions, \method learns more discriminative representations in complex multi-view scenarios.

\begin{figure}[!t]
	\centering
	\includegraphics[width=1.0\linewidth]{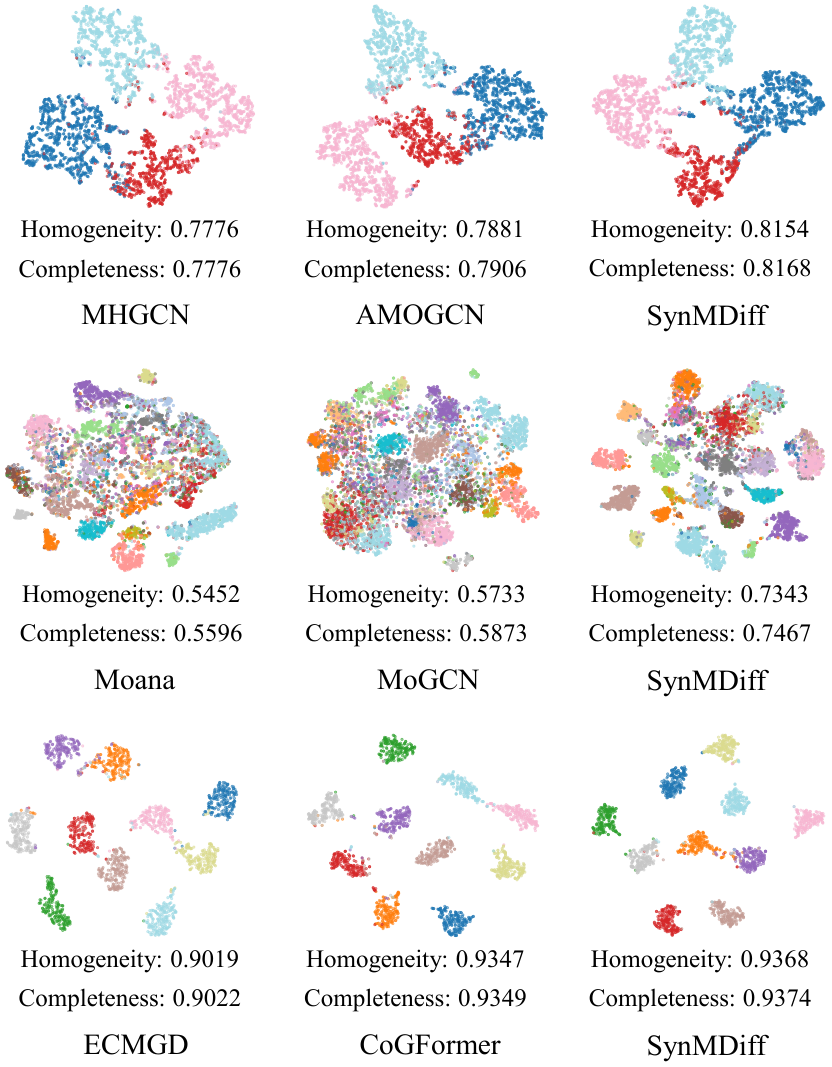}
	\Description{
		A three-by-three grid of t-SNE scatter plots comparing learned
		embeddings. The rows correspond to DBLP, TCGA, and HW, respectively,
		while the columns compare two baseline methods with SynMDiff.
		Points belonging to different classes are distinguished by color.
		On DBLP, SynMDiff produces more compact and clearly separated class
		clusters than MHGCN and AMOGCN. On TCGA, the baseline embeddings
		exhibit substantial overlap, whereas SynMDiff separates most classes
		into distinct clusters. On HW, all methods form relatively clear
		clusters, with SynMDiff achieving slightly better separation.
		SynMDiff obtains the highest homogeneity and completeness scores in
		all three rows: 0.8154 and 0.8168 on DBLP, 0.7343 and 0.7467 on TCGA,
		and 0.9368 and 0.9374 on HW.
	}
	\caption{t-SNE visualizations of embeddings learned by different methods on the heterogeneous graph dataset DBLP (top row), the multi-omics dataset TCGA (middle row), and the multi-view dataset HW (bottom row).}
	\label{fig:TSNE}
\end{figure}

\subsection{Ablation Study}\label{sec:ablation}
In the ablation study, we aim to clarify two questions: (i) How important are intra-view and inter-view propagation to model performance?
(ii) Without the sampling strategy, does the model benefit from a broader receptive field, or does its performance degrade due to the noise introduced by dense adjacency relations?
To answer the first question, we derive two variants that retain only intra-view or inter-view information propagation, denoted as w/o Inter and w/o Intra, respectively.
To answer the second, we remove the sampling module and the centralized training architecture, allowing dense interactions between arbitrary pairs of features across multiple views, this variant is denoted as w/o Samp.
We evaluate these three variants together with \method on six datasets under different multi-view settings. The results are shown in Figure~\ref{fig:ablation}.
As can be clearly seen, all three variants consistently underperform \method across all datasets. Somewhat surprisingly, on most datasets, inter-view information propagation appears to be more important than intra-view aggregation, which further highlights the significance of our study. In addition, removing the sampling strategy leads to a substantial performance drop, suggesting that densely modeling interactions among multi-view features is not a desirable choice.
Such dense interactions introduce a large number of semantically irrelevant connections, and the resulting propagation of noise ultimately degrades model performance.

\begin{figure}[!t]
	\centering
	\includegraphics[width=0.95\linewidth]{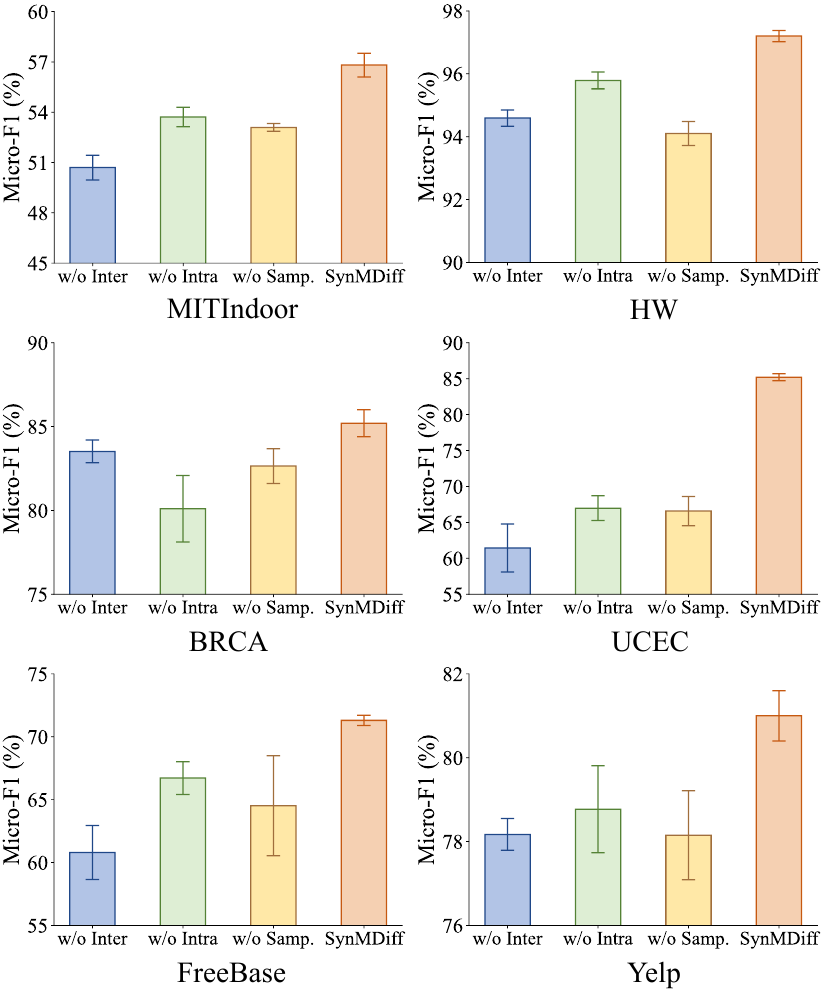}
	\Description{
		Six bar charts arranged in three rows and two columns report
		Micro-F1 scores on MITIndoor, HW, BRCA, UCEC, FreeBase, and Yelp.
		Each chart compares SynMDiff with three ablated variants that remove
		inter-view fusion, intra-view fusion, or the sampling mechanism.
		Error bars indicate variation across experimental runs. The complete
		SynMDiff model achieves the highest Micro-F1 score on every dataset.
		Removing any component reduces performance, with particularly large
		reductions on UCEC and FreeBase, demonstrating that inter-view fusion,
		intra-view fusion, and sampling all contribute to the final results.
	}
	\caption{Ablation study of SynMDiff on six datasets.}
	\label{fig:ablation}
\end{figure}

\section{Discussion and Conclusion}
We propose a synchronous multi-view neural diffusion mechanism to mitigate the conflict of view-specific structural inductive biases that hinder cross-view information propagation.
By modeling pairwise feature interactions in the joint multi-view space as diffusion flows, the proposed method simultaneously captures both intra-view and inter-view adaptive information propagation.
To reduce computational overhead, we design an energy-based topological sampling strategy to prune redundant diffusion flows and introduce a centralized training paradigm based on virtual anchors, enabling efficient inference in large-scale scenarios.

\noindent \textbf{Limitations \& Future Works.}
While we propose a theoretically principled and elegant paradigm, its practical deployment warrants further study, particularly in developing more efficient pruning strategies. Moreover, although the centralized training architecture enables efficient inference and mitigates view-specific biases, it may introduce some information compression. These directions are left for future work.

\begin{acks}
	This work is in part supported by the National Natural Science Foundation of China under Grants U25A20527 and 62276065, and the Fujian Provincial Natural Science Foundation of China under Grant 2024J01510026.
\end{acks}

\balance
\bibliographystyle{ACM-Reference-Format}
\bibliography{references}

\clearpage
\nobalance
\appendix
\numberwithin{equation}{section}
\makeatletter
\begingroup
  \let\saved@shorttitle\shorttitle
  \let\saved@title\@title
  \title{Appendix for ``Synchronous Multi-view Neural Diffusion''}
  \@mktitle
  \let\addresses\@empty
  \num@authorgroups=0\relax
  \@mkauthors
  \twocolumn[\box\mktitle@bx]
  \global\let\shorttitle\saved@shorttitle
  \global\let\@title\saved@title
\endgroup
\@printendtopmatter
\@afterindentfalse
\@afterheading
\makeatother
\pdfbookmark[0]{Appendix}{appendix-title}
\section{Proofs}\label{app:proofs}
\subsection{Proof of Proposition 1}\label{app:proof-prop1}

\begin{restatedpropone}
The asynchronous multi-view diffusion described in Eq.~\eqref{lu diffsuion} constitutes a View-Decoupled Fusion paradigm.
\end{restatedpropone}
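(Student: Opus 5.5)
To prove the statement, I would exhibit Eq.~\eqref{lu diffsuion} in the factorized form required by Definition~\ref{definition:view_decoupled_fusion}. Concretely, I would split the right-hand side into two operators. The first is an intra-view operator
\[
\mathcal{A}^{(v)}(\mathbf{X}^{(v)})_i=\sum_{j=1}^{N}\mathbf{S}^{(v)}_{ij}(\mathbf{x}^{(v)}_j-\mathbf{x}^{(v)}_i).
\]
The second is a cross-view operator
\[
\mathcal{B}_i(\mathbf{x}^{(1)}_i,\dots,\mathbf{x}^{(V)}_i)=\sum_{u}\mathbf{P}_{vu}(\mathbf{x}^{(u)}_i-\mathbf{x}^{(v)}_i).
\]
Following the paper's convention of analysing a single interaction step, I would set $f_{\theta_v}(\mathbf{X}^{(v)}) := \mathbf{X}^{(v)}+\tau\mathcal{A}^{(v)}(\mathbf{X}^{(v)})$, which is one forward Euler step of the intra-view diffusion as in Eq.~\eqref{diffusion forward}. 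The fusion operator $\mathcal{F}$ is then the instance-wise map that adds the $\tau\mathcal{B}_i$ term.

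\textbf{View-Independent Evolution.} The key point is that $\mathbf{S}^{(v)}_{ij}$ is computed only from view $v$, for example through attention on $\mathbf{X}^{(v)}$. Hence every entry of $f_{\theta_v}(\mathbf{X}^{(v)})_i$ is a function of $\mathbf{X}^{(v)}$ alone. Differentiating with respect to $\mathbf{X}^{(u)}_j$ for $u\neq v$ then gives $\mathbf{0}$ for all $(i,j)$: the diffusion coefficients contribute nothing by the chain rule, and the state terms contribute nothing trivially.

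\textbf{Intra-instance View Integration.} In the second term, instance $i$ of view $v$ is coupled only to $\mathbf{x}^{(u)}_i$, the same instance index $i$. Moreover, $\mathbf{P}_{vu}$ is a scalar coefficient depending at most on the co-indexed fiber $\{\mathbf{x}^{(u)}_i\}_u$ (or on $t$). The fused output for instance $i$ is therefore
\[
\mathbf{z}_i=\mathcal{F}\bigl(f_{\theta_1}(\mathbf{x}^{(1)}_i),\dots,f_{\theta_V}(\mathbf{x}^{(V)}_i)\bigr),
\]
a pointwise map over the fiber at $i$, which is exactly the second condition. Over multiple time steps I would argue by induction that the same two-stage structure repeats: each step is an intra-view propagation followed by an instance-wise fusion.

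\textbf{Main obstacle.} The difficulty is that in the continuous ODE the two sums act simultaneously. After several steps, $\mathbf{x}^{(v)}$ does depend on other views through the $\mathbf{P}$ term, so the first condition cannot hold literally for the full trajectory. I would handle this by interpreting the definition per step: the operator is split (Lie/Euler splitting) into an intra-view update that is independent of external views, composed with a co-indexed fusion. I would also note that, to first order in $\tau$, the Euler step of Eq.~\eqref{lu diffsuion} coincides with this composition, since the cross terms are $O(\tau^2)$. This matches the paper's stated convention of considering a single interaction step.
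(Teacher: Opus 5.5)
Your proposal is correct and follows the paper's proof. The paper likewise splits the right-hand side of Eq.~\eqref{lu diffsuion} into an intra-view term that depends only on $\mathbf{X}^{(v)}$ and a cross-view term that depends only on the co-indexed fiber $\{\mathbf{x}^{(u)}_i\}_{u=1}^{V}$, and checks View-Independent Evolution and Intra-instance View Integration on these two pieces through vanishing partial derivatives. Your Euler/Lie-splitting construction of $f_{\theta_v}$ and $\mathcal{F}$, with the per-step caveat, goes beyond the paper, which argues directly on the vector field. That construction matches the Euler step of Eq.~\eqref{lu diffsuion} only up to $O(\tau^2)$. To make it exact, let $f_{\theta_v}$ also pass $\mathbf{x}^{(v)}_i$ through, so that $\mathcal{F}$ evaluates $\mathcal{B}_i$ on the raw co-indexed features.
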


\begin{proof}
Let
\begin{equation}
\begin{aligned}
\Phi_i^{(v)}(\mathcal{X}(t))
:=
\frac{\partial \mathbf{x}^{(v)}_i(t)}{\partial t}
&=
\sum_{j=1}^{N} \mathbf{S}^{(v)}_{ij}(t)\big(\mathbf{x}^{(v)}_j(t)-\mathbf{x}^{(v)}_i(t)\big) \\
&+
\sum_{u=1}^{V} \mathbf{P}_{vu}(t)\big(\mathbf{x}^{(u)}_i(t)-\mathbf{x}^{(v)}_i(t)\big)
\end{aligned}
\end{equation}
denote the update operator defined by Eq.~\eqref{lu diffsuion} in the main body. It suffices to verify that $\Phi_i^{(v)}$ satisfies the two properties in Definition~\ref{definition:view_decoupled_fusion}.

We decompose $\Phi_i^{(v)}$ as
\begin{equation}
\Phi_i^{(v)}(\mathcal{X}(t))
=
A_i^{(v)}\big(\mathbf{X}^{(v)}(t)\big)
+
B_i^{(v)}\big(\mathbf{x}^{(1)}_i(t),\dots,\mathbf{x}^{(V)}_i(t)\big),
\end{equation}
where
\begin{equation}
A_i^{(v)}
:=
\sum_{j=1}^{N} \mathbf{S}^{(v)}_{ij}(t)\big(\mathbf{x}^{(v)}_j(t)-\mathbf{x}^{(v)}_i(t)\big),
\end{equation}
and
\begin{equation}
B_i^{(v)}
:=
\sum_{u=1}^{V} \mathbf{P}_{vu}(t)\big(\mathbf{x}^{(u)}_i(t)-\mathbf{x}^{(v)}_i(t)\big).
\end{equation}

\paragraph{(i) View-Independent Evolution.}
By construction, $A_i^{(v)}$ depends only on $\{\mathbf{x}^{(v)}_j(t)\}_{j=1}^{N}$. Therefore, for any $u \neq v$,
\begin{equation}
\frac{\partial A_i^{(v)}}{\partial \mathbf{x}^{(u)}_j(t)} = \mathbf{0}, \quad \forall j,
\end{equation}
which verifies the View-Independent Evolution property in Definition~\ref{definition:view_decoupled_fusion}.

\paragraph{(ii) Intra-instance View Integration.}
The term $B_i^{(v)}$ depends only on the co-indexed features $\{\mathbf{x}^{(u)}_i(t)\}_{u=1}^{V}$. Hence, for any $j \neq i$,
\begin{equation}
\frac{\partial B_i^{(v)}}{\partial \mathbf{x}^{(u)}_j(t)} = \mathbf{0}, \quad \forall u,
\end{equation}
which shows that cross-view interaction is restricted to point-wise integration over co-indexed instances.

Consequently, $\Phi_i^{(v)}$ contains no direct dependence on $\mathbf{x}^{(u)}_j(t)$ with $u \neq v$ and $j \neq i$. Therefore, cross-view and cross-instance dependencies cannot be jointly modeled within a single update step; they can only be induced indirectly through a two-stage mechanism consisting of intra-view propagation followed by point-wise cross-view aggregation.
Thus, the asynchronous multi-view diffusion satisfies both defining properties of View-Decoupled Fusion, and therefore constitutes a View-Decoupled Fusion paradigm.
\end{proof}

\subsection{Proof of Proposition 2}\label{app:proof-prop2}

\begin{restatedproptwo}
The synchronous multi-view diffusion characterized by Eq.~\eqref{global diffusion} constitutes a Global Coupled Fusion.
\end{restatedproptwo}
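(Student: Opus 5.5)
We will mirror the proof of Proposition~\ref{prop:asynchronous}. Define the update operator
\begin{equation*}
\Psi_i^{(v)}(\mathcal{X}(t)) := \sum_{u=1}^{V}\alpha^{(u)}(t)\sum_{j=1}^{N}\mathbf{S}^{(v,u)}_{i,j}(t)\big(\mathbf{x}^{(u)}_j(t)-\mathbf{x}^{(v)}_i(t)\big)
\end{equation*}
from Eq.~\eqref{global diffusion}. We then check the two properties of Definition~\ref{definition:global_coupled_fusion} for $\Psi_i^{(v)}$, which plays the role of the latent transformation $f_{\theta_v}$ over one time step.

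\textbf{View-Joint Evolution.} Fix any pair $(u,v)$ and any $(i,j)$. We first treat the coupling weights as frozen, i.e.\ we hold $\mathbf{S}$ and $\alpha$ fixed inside the derivative. The Jacobian of $\Psi_i^{(v)}$ with respect to $\mathbf{x}^{(u)}_j$ then contains the direct term $\alpha^{(u)}(t)\mathbf{S}^{(v,u)}_{i,j}(t)\,\mathbf{I}$. In the diagonal case $(u,j)=(v,i)$ there is an extra self term $-\sum_{u',j'}\alpha^{(u')}\mathbf{S}^{(v,u')}_{i,j'}\mathbf{I}$, which is generically nonzero as well. Because the weights come from a softmax, they are strictly positive, so $\mathbf{S}^{(v,u)}_{i,j}>0$. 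Assuming $\alpha^{(u)}>0$ (the trade-off coefficients are positive), the Jacobian is nonzero for every $u\neq v$ and every $j$, including $j\neq i$.

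We would next restore the state dependence of $\mathbf{S}$ through the attention parameterization. By the product rule, this adds a term
\begin{equation*}
\sum_{u',j'}\alpha^{(u')}\,\frac{\partial \mathbf{S}^{(v,u')}_{i,j'}}{\partial \mathbf{x}^{(u)}_j}\,\big(\mathbf{x}^{(u')}_{j'}-\mathbf{x}^{(v)}_i\big).
\end{equation*}
\textbf{Main obstacle.} We must argue that this extra term cannot cancel the direct term for all inputs. A pointwise statement is too strong, so the claim is \emph{generic} nonvanishing. The plan is to note that the full Jacobian is a real-analytic function of the state and parameters. It is not identically zero, because at a configuration where all features coincide the correction vanishes, which leaves $\alpha^{(u)}\mathbf{S}^{(v,u)}_{i,j}\mathbf{I}\neq\mathbf{0}$. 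Its zero set therefore has measure zero. This contrasts with Proposition~\ref{prop:asynchronous}, where the corresponding derivative is identically $\mathbf{0}$ for $u\neq v$, $j\neq i$.

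\textbf{Cross-instance View Interaction.} After the Euler step in Eq.~\eqref{forward}, $\mathbf{x}^{(v)}_i(t+1)$ is a weighted combination of all $\mathbf{x}^{(u)}_j(t)$ across every $u$ and $j$. The fused representation $\mathbf{z}_i$, whether it is the anchor $\mathbf{h}_i$ or any aggregation of $\{\mathbf{x}^{(v)}_i(T)\}_v$, is therefore a function of $\bigcup_v\{f_{\theta_v}(\mathbf{x}^{(v)}_1),\dots,f_{\theta_v}(\mathbf{x}^{(v)}_N)\}$. By the Jacobian computation above, its dependence on cross-instance entries is nontrivial, which is exactly the form in Definition~\ref{definition:global_coupled_fusion}. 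Both properties hold, and the proposition follows.
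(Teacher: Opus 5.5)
Your proposal is correct and follows the same route as the paper's proof: take the one-step update operator from Eq.~\eqref{global diffusion} and check View-Joint Evolution and Cross-instance View Interaction directly. The paper only asserts that the explicit appearance of $\mathbf{x}^{(u)}_j(t)$ makes the derivative nonzero ``in general,'' so your explicit Jacobian (using softmax positivity and $\alpha^{(u)}>0$), together with the real-analytic genericity argument evaluated at the coincident-feature configuration, makes rigorous a step the paper leaves informal.
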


\begin{proof}
Let
\begin{equation}
\Phi_i^{(v)}(\mathcal{X}(t))
\!:=\!
\frac{\partial \mathbf{x}^{(v)}_i(t)}{\partial t}
\!=\!
\!\sum_{u=1}^{V}\! \alpha^{(u)}(t)\!\sum_{j=1}^{N}\!\mathbf{S}^{(v,u)}_{i,j}(t)\big(\mathbf{x}^{(u)}_j(t)-\mathbf{x}^{(v)}_i(t)\big)
\end{equation}
denote the update operator defined by Eq.~\eqref{global diffusion} in the main body. To establish the proposition, it suffices to verify that $\Phi_i^{(v)}$ satisfies the two properties in Definition~\ref{definition:global_coupled_fusion}.

\paragraph{(i) View-Joint Evolution.}
By Eq.~\eqref{global diffusion} in the main body, the update of $\mathbf{x}^{(v)}_i(t)$ depends explicitly on features from other views through terms of the form
\[
\mathbf{S}^{(v,u)}_{i,j}(t)\big(\mathbf{x}^{(u)}_j(t)-\mathbf{x}^{(v)}_i(t)\big).
\]
Accordingly, the operator $\Phi_i^{(v)}(\mathcal{X}(t))$ is, in general, not independent of $\mathbf{x}^{(u)}_j(t)$ for $u \neq v$; equivalently,
\begin{equation}
\frac{\partial \Phi_i^{(v)}(\mathcal{X}(t))}{\partial \mathbf{x}^{(u)}_j(t)} \neq \mathbf{0}.
\end{equation}
This verifies the View-Joint Evolution property in Definition~\ref{definition:global_coupled_fusion}.

\paragraph{(ii) Cross-instance View Interaction.}
Moreover, the operator $\Phi_i^{(v)}$ aggregates information over all instance-view pairs $(j,u)$. Equivalently, the updated representation of instance $i$ in view $v$ is a function of the global multi-view collection
\begin{equation}
\bigcup_{u=1}^{V}\{\mathbf{x}^{(u)}_1(t),\dots,\mathbf{x}^{(u)}_N(t)\},
\end{equation}
rather than only the co-indexed fiber $\{\mathbf{x}^{(u)}_i(t)\}_{u=1}^{V}$. This verifies the Cross-instance View Interaction property.

Taken together, the synchronous diffusion operator exhibits both inter-view dependency and cross-instance coupling within a single update step. Therefore, the synchronous multi-view diffusion satisfies both defining properties, and hence constitutes a Global Coupled Fusion paradigm.
\end{proof}

\subsection{Proof of Corollary 1}\label{app:proof-cor1}

\begin{restatedcor}
The asynchronous multi-view diffusion in Eq.~\eqref{lu diffsuion} can be regarded as a restricted case of the synchronous multi-view diffusion in Eq.~\eqref{global diffusion}.
\end{restatedcor}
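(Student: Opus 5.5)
The plan is to exhibit an explicit choice of the synchronous coefficients $\alpha^{(u)}(t)$ and $\mathbf{S}^{(v,u)}_{i,j}(t)$ in Eq.~\eqref{global diffusion} under which the right-hand side coincides term by term with that of Eq.~\eqref{lu diffsuion}. Both equations are linear in the coupling weights for fixed states, so it suffices to match coefficients of each difference $\mathbf{x}^{(u)}_j(t)-\mathbf{x}^{(v)}_i(t)$. For simplicity I would first take $\alpha^{(u)}(t)\equiv 1$; if one insists on normalized trade-off weights, any positive $\alpha^{(u)}(t)$ can be absorbed by dividing $\mathbf{S}^{(v,u)}_{i,j}$ by it.

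\textbf{Construction.} I would split the pairs $(u,j)$ into three cases and set
\begin{equation*}
\mathbf{S}^{(v,u)}_{i,j}(t)=
\begin{cases}
\mathbf{S}^{(v)}_{ij}(t), & u=v,\\
\mathbf{P}_{vu}(t), & u\neq v,\ j=i,\\
0, & u\neq v,\ j\neq i.
\end{cases}
\end{equation*}
Substituting these into Eq.~\eqref{global diffusion}, the $u=v$ block reproduces $\sum_j \mathbf{S}^{(v)}_{ij}(\mathbf{x}^{(v)}_j-\mathbf{x}^{(v)}_i)$, and the $u\neq v$ blocks collapse to $\sum_{u\neq v}\mathbf{P}_{vu}(\mathbf{x}^{(u)}_i-\mathbf{x}^{(v)}_i)$. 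The term $u=v$ of the $\mathbf{P}$-sum in Eq.~\eqref{lu diffsuion} vanishes identically because $\mathbf{x}^{(v)}_i-\mathbf{x}^{(v)}_i=\mathbf{0}$. Hence the two vector fields agree. Since both are ODEs with the same initial condition, their trajectories agree as well, and applying the same argument to the Euler scheme Eq.~\eqref{forward} gives the same conclusion for the discrete iterates.

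\textbf{Interpretation as a restriction.} I would then state the result as a containment. The asynchronous family is exactly the subfamily of synchronous systems whose coupling tensor has block-diagonal-plus-diagonal support: the cross-view blocks $(v,u)$ with $u\neq v$ are diagonal in the instance index, and their entries are constant in $i$ and equal to $\mathbf{P}_{vu}$. These are the linear constraints defining the restriction. The inclusion is strict, since any nonzero off-diagonal cross-view entry $\mathbf{S}^{(v,u)}_{i,j}$ with $u\neq v$, $j\neq i$ produces a nonzero $\partial\Phi_i^{(v)}/\partial\mathbf{x}^{(u)}_j$. By the proof of Proposition~\ref{prop:asynchronous}, no asynchronous system can have such a dependence.

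\textbf{Main obstacle.} The expected difficulty is reconciling this construction with the attention/softmax parameterization, under which every $\mathbf{S}^{(v,u)}_{i,j}$ is strictly positive and row-normalized, so exact zeros are not attainable. I would address this by stating the corollary at the level of the general coefficient family in Eq.~\eqref{global diffusion}, which is how it is posed. I would also remark that the masking in Eq.~\eqref{truncation} can realize the zero pattern exactly, and that without masking the asynchronous model is recovered as a limit of softmax weights. A minor bookkeeping point is checking that the $\mathbf{P}_{vv}$ term is harmless, which the cancellation above settles.
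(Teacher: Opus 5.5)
Your proposal is correct and is essentially the paper's own proof. The paper likewise sets $\alpha^{(u)}(t)\equiv 1$, uses exactly your three-case choice of $\mathbf{S}^{(v,u)}_{i,j}(t)$, and closes by noting that the $u=v$ term of the $\mathbf{P}$-sum vanishes identically. Your additional remarks go beyond what the paper claims and are not needed for the corollary as stated. The realizability claims in particular would need more care, since the threshold mask $\mathbf{M}(\theta)$ is dictated by feature similarity rather than chosen freely, and the softmax-limit statement is only heuristic.
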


\begin{proof}
Let
\begin{equation}
\Phi_i^{(v)}(\mathcal{X}(t))
:=
\sum_{u=1}^{V} \alpha^{(u)}(t)\sum_{j=1}^{N}\mathbf{S}^{(v,u)}_{i,j}(t)\big(\mathbf{x}^{(u)}_j(t)-\mathbf{x}^{(v)}_i(t)\big)
\end{equation}
denote the synchronous update operator defined by Eq.~\eqref{global diffusion} in the main body. To establish the corollary, it suffices to show that Eq.~\eqref{lu diffsuion} in the main body can be recovered from $\Phi_i^{(v)}$ under a restricted choice of the coupling coefficients.
Consider the following parameterization:
\begin{equation}
\alpha^{(u)}(t) \equiv 1, \qquad \forall u \in \{1,\dots,V\},
\end{equation}
and
\begin{equation}
\mathbf{S}^{(v,u)}_{i,j}(t)
=
\begin{cases}
\mathbf{S}^{(v)}_{ij}(t), & u=v,\\
\mathbf{P}_{vu}(t), & u \neq v,\ j=i,\\
0, & u \neq v,\ j\neq i.
\end{cases}
\end{equation}
Under this restriction, the synchronous operator becomes
\begin{equation}
\begin{aligned}
\Phi_i^{(v)}(\mathcal{X}(t))
&=
\sum_{j=1}^{N}\mathbf{S}^{(v,v)}_{i,j}(t)\big(\mathbf{x}^{(v)}_j(t)-\mathbf{x}^{(v)}_i(t)\big)\\
&+\sum_{u\neq v}\sum_{j=1}^{N}\mathbf{S}^{(v,u)}_{i,j}(t)\big(\mathbf{x}^{(u)}_j(t)-\mathbf{x}^{(v)}_i(t)\big)\\
&=
\sum_{j=1}^{N}\mathbf{S}^{(v)}_{ij}(t)\big(\mathbf{x}^{(v)}_j(t)-\mathbf{x}^{(v)}_i(t)\big)\\
&+\sum_{u\neq v}\mathbf{P}_{vu}(t)\big(\mathbf{x}^{(u)}_i(t)-\mathbf{x}^{(v)}_i(t)\big).
\end{aligned}
\end{equation}
Since the term corresponding to $u=v$ in
\[
\sum_{u=1}^{V}\mathbf{P}_{vu}(t)\big(\mathbf{x}^{(u)}_i(t)-\mathbf{x}^{(v)}_i(t)\big)
\]
vanishes identically, the above expression is equivalently
\begin{equation}
\begin{aligned}
\Phi_i^{(v)}(\mathcal{X}(t))
&=
\sum_{j=1}^{N}\mathbf{S}^{(v)}_{ij}(t)\big(\mathbf{x}^{(v)}_j(t)-\mathbf{x}^{(v)}_i(t)\big)\\
&+
\sum_{u=1}^{V}\mathbf{P}_{vu}(t)\big(\mathbf{x}^{(u)}_i(t)-\mathbf{x}^{(v)}_i(t)\big),
\end{aligned}
\end{equation}
which is exactly Eq.~\eqref{lu diffsuion} in the main body.
Therefore, the asynchronous diffusion is obtained from the synchronous formulation by imposing two structural restrictions: first, cross-view interactions are confined to co-indexed instances ($j=i$); second, all cross-instance interactions across different views ($u\neq v,\ j\neq i$) are removed. Hence, the asynchronous multi-view diffusion is a restricted case of the synchronous multi-view diffusion.
\end{proof}

\subsection{Proof of Theorem 1}\label{app:proof-thm1}
\begin{restatedthm}
(Support Recovery Guarantee)
Let $\mathbf{S}^*$ denote the adjacency matrix induced by the oracle topology $\mathcal{G}^*$, and let $\hat{\mathbf{S}}$ be the truncated adjacency matrix obtained via Eq.~\eqref{truncation}.
Under the incoherence condition,
the probability of structural mismatch between $\hat{\mathbf{S}}$ and $\mathbf{S}^*$ is bounded as:
\begin{equation}
\mathbb{P}\left( \operatorname{supp}(\hat{\mathbf{S}}) \neq \operatorname{supp}(\mathbf{S}^*) \right) \le (NV)^2 \exp\left( - C D \theta^2 \right).
\end{equation}
\end{restatedthm}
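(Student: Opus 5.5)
The approach is a union bound over the $(NV)^2$ candidate edges, combined with a per-edge sub-Gaussian concentration bound for the similarity statistic $s(\cdot,\cdot)$ used in Eq.~\eqref{truncation}. The support of $\hat{\mathbf{S}}$ is determined entirely by the mask $\mathbf{M}(\theta)$, since attention coefficients are strictly positive. The support of $\mathbf{S}^*$ is the label-equality pattern. Hence
\[
\{\operatorname{supp}(\hat{\mathbf{S}}) \neq \operatorname{supp}(\mathbf{S}^*)\} \subseteq \bigcup_{(v,i),(u,j)} \mathcal{B}_{ij}^{vu},
\]
where $\mathcal{B}_{ij}^{vu}$ is the event that pair $((v,i),(u,j))$ is misclassified. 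Misclassification means either a same-label pair with $s<\theta$ (a false negative) or a different-label pair with $s\ge\theta$ (a false positive). Once each $\mathbb{P}(\mathcal{B}_{ij}^{vu}) \le \exp(-CD\theta^2)$ is established, summing over the at most $(NV)^2$ ordered pairs gives the stated bound.

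For the per-edge bound I would adopt the Gaussian Markov Random Field viewpoint invoked just before the theorem. Each feature is modeled as $\mathbf{x}_i^{(v)} = \boldsymbol{\mu}_{y_i} + \boldsymbol{\varepsilon}_i^{(v)}$, with class mean $\boldsymbol{\mu}_{y_i}$ (after the view-aligning projection) and independent isotropic sub-Gaussian noise whose coordinates have variance proxy $\sigma^2$. The similarity is taken as the normalized inner product $s(\mathbf{x},\mathbf{x}') = \frac{1}{D}\langle \mathbf{x},\mathbf{x}'\rangle$, or the cosine, which concentrates similarly. The incoherence condition is formalized as a separation of expected similarities:
\[
\mathbb{E}[s]\ge 2\theta \text{ for same-label pairs}, \qquad \mathbb{E}[s]\le 0 \text{ for different-label pairs}.
\]
Here $\theta$ plays the role of the discriminative gap, and the second condition encodes decorrelated class means. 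Each misclassification event then requires a deviation $|s-\mathbb{E}s|\ge\theta$.

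Next, $s-\mathbb{E}s$ is expanded as an average over $D$ coordinates of terms linear and bilinear in the noise: $\langle\boldsymbol{\mu},\boldsymbol{\varepsilon}'\rangle$, $\langle\boldsymbol{\varepsilon},\boldsymbol{\mu}'\rangle$, and $\langle\boldsymbol{\varepsilon},\boldsymbol{\varepsilon}'\rangle$. The linear parts are sub-Gaussian with variance proxy $O(\|\boldsymbol{\mu}\|_\infty^2\sigma^2/D)$, so Hoeffding's inequality bounds them. The bilinear part is an average of $D$ independent sub-exponential products, and Bernstein's inequality bounds it. Together these give $\mathbb{P}(|s-\mathbb{E}s|\ge\theta)\le 2\exp(-cD\min(\theta^2/K^2,\theta/K))$. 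For $\theta\le K$ the exponent is quadratic, and the factor $2$ and the constants are absorbed into $C$.

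The main obstacle is the cosine/normalization and the bilinear noise term. Bernstein yields a $\theta^2$ rate only in the small-deviation regime, so I would restrict to $\theta$ bounded by a constant, which is natural for cosine similarity in $[-1,1]$. For cosine, I would additionally control the norms $\|\mathbf{x}\|^2/D$ around their means with another Bernstein bound and use a union over these events, adjusting $C$. If a fully rigorous handling of that regime is too messy, a fallback is to assume bounded features, so that every coordinate product is bounded and plain Hoeffding gives $\exp(-CD\theta^2)$ directly.
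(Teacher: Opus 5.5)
Your argument is correct and follows the same skeleton as the paper: a misclassified pair forces the similarity to deviate from its mean by at least the margin, a per-entry tail bound controls that deviation, and a union bound over the $(NV)^2$ ordered pairs finishes. Your margin condition ($\mathbb{E}[s]\ge 2\theta$ for same-label pairs, $\le 0$ otherwise) is exactly the paper's separation $\mu_{ab}\ge(1+c_0)\theta$, $\mu_{ab}\le(1-c_0)\theta$ with $c_0=1$.

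The real difference is the key lemma. The paper simply declares that the incoherence condition implies the entrywise tail $\mathbb{P}(|\widetilde{S}_{ab}-\mu_{ab}|\ge t)\le 2\exp(-C_1Dt^2)$ for all $t>0$. You instead derive this tail from an explicit class-mean-plus-sub-Gaussian-noise model, using Hoeffding on the linear terms and Bernstein on the bilinear term.

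The paper's route is shorter and agnostic to the similarity measure and the data model, but all of the content, including the factor $D$ in the exponent, sits inside the assumption. Yours explains where $D$ comes from: averaging over $D$ independent coordinates. It also exposes that the pure $\theta^2$ rate holds only for $\theta$ bounded by a constant. For unbounded features the bilinear term has only an $\exp(-cDt)$ tail at large $t$, so the paper's assumed Gaussian tail for all $t>0$ is not actually available in such models. This is harmless there only because the bound is applied at $t=c_0\theta$ with $\theta$ bounded.

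Your observation that the softmax weights are strictly positive, so that $\operatorname{supp}(\hat{\mathbf{S}})$ equals the support of $\mathbf{M}(\theta)$, also repairs a conflation in the paper's proof. The paper writes the threshold as acting on $\widetilde{S}_{ab}$ itself, whereas Eq.~\eqref{truncation} thresholds the similarity $s$.

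Finally, both you and the paper absorb the prefactor in front of the exponential into $C$. This is legitimate, provided $NV\ge 2$. If $(NV)^2e^{-CD\theta^2}\ge 1$ the claim is trivial. Otherwise $D\theta^2>2\ln(NV)/C$ is bounded below, so shrinking $C$ by a constant factor absorbs the prefactor.
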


\begin{proof}
Let
\[
\mathcal{I}:=\{(v,i): v\in[V],\, i\in[N]\}
\]
denote the set of all view-instance indices, and write
\[
M:=|\mathcal{I}|=NV.
\]
For any \(a,b\in\mathcal{I}\), let \(\widetilde{S}_{ab}\) denote the corresponding entry of the dense affinity matrix before truncation, and let \(S^*_{ab}\) denote the corresponding oracle entry induced by \(\mathcal{G}^*\).

By Eq.~\eqref{truncation} in the main body, the truncated matrix \(\hat{\mathbf{S}}\) is obtained by thresholding \(\widetilde{\mathbf{S}}\) at level \(\theta\). Hence, entrywise,
\begin{equation}
\hat{S}_{ab}\neq 0
\quad\Longleftrightarrow\quad
\widetilde{S}_{ab}\ge \theta.
\end{equation}
Therefore, recovering the support of \(\mathbf{S}^*\) is equivalent to correctly deciding, for every pair \((a,b)\), whether \(\widetilde{S}_{ab}\) lies above or below the threshold \(\theta\).

Define
\[
\mu_{ab}:=\mathbb{E}[\widetilde{S}_{ab}].
\]
Under the incoherence condition, the oracle edges and non-edges are separated around the truncation threshold. Namely, there exists a constant \(c_0>0\) such that for every \(a,b\in\mathcal{I}\),
\begin{equation}\label{eq:margin-separation}
S^*_{ab}\neq 0 \;\Longrightarrow\; \mu_{ab}\ge \theta + c_0\theta,
\qquad
S^*_{ab}=0 \;\Longrightarrow\; \mu_{ab}\le \theta - c_0\theta.
\end{equation}
Moreover, the same incoherence condition implies a concentration inequality for each entry: there exists a constant \(C_1>0\) such that for any \(t>0\),
\begin{equation}\label{eq:entry-concentration}
\mathbb{P}\bigl(|\widetilde{S}_{ab}-\mu_{ab}|\ge t\bigr)
\le 2\exp(-C_1Dt^2).
\end{equation}

Now fix an arbitrary pair \(a,b\in\mathcal{I}\), and define the support recovery error event
\begin{equation}
E_{ab}
:=
\Bigl\{
\mathbf{1}\{\hat{S}_{ab}\neq 0\}
\neq
\mathbf{1}\{S^*_{ab}\neq 0\}
\Bigr\}.
\end{equation}
We bound \(\mathbb{P}(E_{ab})\) by considering two cases.

\paragraph{Case 1: \(S^*_{ab}\neq 0\).}
In this case, \eqref{eq:margin-separation} yields
\[
\mu_{ab}\ge \theta + c_0\theta.
\]
If a support recovery error occurs, then necessarily \(\hat{S}_{ab}=0\), which by the truncation rule means
\[
\widetilde{S}_{ab}<\theta.
\]
Hence,
\[
\mu_{ab}-\widetilde{S}_{ab}
\ge (\theta+c_0\theta)-\theta
= c_0\theta,
\]
and therefore
\[
|\widetilde{S}_{ab}-\mu_{ab}|\ge c_0\theta.
\]
Applying \eqref{eq:entry-concentration} with \(t=c_0\theta\), we obtain
\begin{equation}
\mathbb{P}(E_{ab})
\le
\mathbb{P}\bigl(|\widetilde{S}_{ab}-\mu_{ab}|\ge c_0\theta\bigr)
\le
2\exp(-C_1Dc_0^2\theta^2).
\end{equation}

\paragraph{Case 2: \(S^*_{ab}=0\).}
In this case, \eqref{eq:margin-separation} yields
\[
\mu_{ab}\le \theta - c_0\theta.
\]
If a support recovery error occurs, then necessarily \(\hat{S}_{ab}\neq 0\), namely
\[
\widetilde{S}_{ab}\ge \theta.
\]
Thus,
\[
\widetilde{S}_{ab}-\mu_{ab}
\ge \theta-(\theta-c_0\theta)
= c_0\theta,
\]
which again implies
\[
|\widetilde{S}_{ab}-\mu_{ab}|\ge c_0\theta.
\]
Applying \eqref{eq:entry-concentration} once more gives
\begin{equation}
\mathbb{P}(E_{ab})
\le
2\exp(-C_1Dc_0^2\theta^2).
\end{equation}

Combining the two cases, there exists a constant \(C_2>0\) such that for every \(a,b\in\mathcal{I}\),
\begin{equation}
\mathbb{P}(E_{ab})
\le
\exp(-C_2D\theta^2).
\end{equation}

Finally, the event that the recovered support differs from the oracle support is exactly
\begin{equation}
\Bigl\{
\operatorname{supp}(\hat{\mathbf{S}})
\neq
\operatorname{supp}(\mathbf{S}^*)
\Bigr\}
=
\bigcup_{a,b\in\mathcal{I}} E_{ab}.
\end{equation}
Applying the union bound over all \(M^2=(NV)^2\) entries yields
\begin{equation}
\mathbb{P}\left(
\operatorname{supp}(\hat{\mathbf{S}})
\neq
\operatorname{supp}(\mathbf{S}^*)
\right)
\le
\sum_{a,b\in\mathcal{I}} \mathbb{P}(E_{ab})
\le
M^2 \exp(-C_2D\theta^2).
\end{equation}
Substituting \(M=NV\), we obtain
\begin{equation}
\mathbb{P}\left(
\operatorname{supp}(\hat{\mathbf{S}})
\neq
\operatorname{supp}(\mathbf{S}^*)
\right)
\le
(NV)^2 \exp(-C_2D\theta^2).
\end{equation}
Renaming \(C_2\) as \(C\) completes the proof.
\end{proof}

\section{Computational Complexity Analysis}\label{app:complexity}

In this section, we first analyze the time and space complexity of the asynchronous multi-view diffusion derived from Eq.~\eqref{lu diffsuion} in the main body, and the synchronous multi-view diffusion described in Eq.~\eqref{global diffusion}, respectively. We then further examine the complexity reduction achieved by the proposed topological sampling strategy and the Ego-Net style architecture.
We denote by $N$ the number of instances, by $V$ the number of views, and by $D$ the feature dimension of the latent representation.

\subsection{Complexity of Asynchronous Multi-view Diffusion}

We first consider the asynchronous multi-view diffusion in Eq.~\eqref{lu diffsuion} of the main body. For each target feature $\mathbf{x}_i^{(v)} \in \mathbb{R}^D$, the first term performs intra-view aggregation over all $N$ instances in the $v$-th view. Since each interaction involves a $D$-dimensional vector operation, its cost is $O(ND)$. The second term performs inter-view fusion over all $V$ views for the same instance index $i$, which costs $O(VD)$. Therefore, the cost of updating one feature is
$$
O\big((N+V)D\big).
$$
Since there are $NV$ target features in total, the overall time complexity of one asynchronous diffusion step is
$$
O\big(NV(N+V)D\big)
=
O(N^2VD + NV^2D).
$$

We next analyze the space complexity. Storing the feature tensor $\mathcal{X}=\{\mathbf{X}^{(v)}\}_{v=1}^{V}$ requires $O(NVD)$ memory. The intra-view diffusion coefficients require storing $V$ dense matrices $\mathbf{S}^{(v)} \in \mathbb{R}^{N\times N}$, which costs $O(VN^2)$. In addition, the cross-view coefficients $\mathbf{P}_{vu}$ form a $V \times V$ table, requiring $O(V^2)$ memory. Hence, the total space complexity is
$$
O(NVD + VN^2 + V^2).
$$

In practical scenarios, it is typical that $N \gg D \gg V$. Under this regime, the dominant term in the time complexity is $O(N^2VD)$, and the dominant term in the space complexity is $O(VN^2)$. By further omitting the lower-order dependence on $D$ and $V$, the asynchronous diffusion admits the simplified complexity
$$
\text{Time} = O(N^2), \qquad
\text{Space} = O(N^2).
$$

\subsection{Complexity of Synchronous Multi-view Diffusion}

We now consider the synchronous multi-view diffusion in Eq.~\eqref{global diffusion} of the main body. For each target feature $\mathbf{x}_i^{(v)}$, the update traverses all source pairs $(u,j)$ in the entire multi-view space. Since there are $VN$ such source nodes and each interaction involves a $D$-dimensional vector operation, the cost of updating one target feature is
$$
O(VND).
$$
As there are $NV$ target features in total, the overall time complexity of one synchronous diffusion step becomes
$$
O(NV \cdot VND)
=
O(V^2N^2D).
$$

For the space complexity, storing all node features requires $O(NVD)$ memory. Meanwhile, the dual-coupling coefficients $\mathbf{S}_{i,j}^{(v,u)}$ form a dense tensor over all target-source pairs. The number of such coefficients is
$$
V \times N \times V \times N = V^2N^2,
$$
which gives coefficient storage complexity $O(V^2N^2)$. Therefore, the total space complexity is
$$
O(NVD + V^2N^2).
$$

Again, under the practical regime $N \gg D \gg V$, the dominant terms are $O(V^2N^2D)$ in time and $O(V^2N^2)$ in space. By omitting the lower-order dependence on $D$ and $V$, the synchronous diffusion can be simplified as
$$
\text{Time} = O(N^2), \qquad
\text{Space} = O(N^2).
$$

\subsection{Complexity Reduction via Topological Sampling and Ego-Net Architecture}

A direct implementation of synchronous diffusion is expensive because it requires exhaustive traversal over all $V^2N^2$ target-source pairs. To address this issue, we first apply topological sampling, which truncates the dense topology into a sparse graph. Let $E$ denote the number of retained edges after truncation. Then, instead of aggregating over all possible target-source pairs, diffusion is only performed on the retained edges.
Since each retained edge still involves a $D$-dimensional message passing operation, the total time complexity of one sparse diffusion step becomes
$$
O(ED).
$$

For the space complexity, the sparse topology only needs to store the retained edges and their associated coefficients, which requires $O(E)$ memory. If the feature tensor is also counted, the total memory is
$$
O(NVD + E).
$$
We further introduce the Ego-Net style centralized architecture, where each instance is associated with a virtual anchor $\mathbf{h}_i$. For each anchor $\mathbf{h}_i$, the cost is proportional to the number of retained neighbors, namely
$$
O\left(D\sum_{v=1}^{V}|\mathcal{N}_i^{(v)}|\right).
$$
Summing over all anchors yields the overall time complexity
$$
O\left(
D\sum_{i=1}^{N}\sum_{v=1}^{V}|\mathcal{N}_i^{(v)}|
\right)
=
O(ED).
$$
The corresponding space complexity is still determined by the sparse retained topology, namely $O(E)$, or equivalently $O(NVD+E)$ if the feature storage is included.
Under the practically relevant sparse regime, the average retained degree is bounded by a constant, and thus $E = O(N)$. Substituting this into the above expression gives
$$
\text{Time} = O(ND), \qquad
\text{Space} = O(NVD + E).
$$

Finally, under the practical assumption $N \gg D \gg V$, by omitting the lower-order dependence on $D$ and $V$, the proposed topological sampling and Ego-Net architecture reduce the complexity to
$$
\text{Time} = O(N), \qquad
\text{Space} = O(E).
$$
Therefore, the proposed design reduces the synchronous diffusion from quadratic complexity in the view-joint space to linear-time propagation with sparse topology storage, thereby enabling scalable deployment in large-scale scenarios.

\section{Algorithm}\label{app:algorithm}

This section presents the algorithmic realization of SynMDiff based on the formulation in the main text. The complete procedure is provided in Algorithm~\ref{alg:synmdiff}.

\begin{algorithm}[t!]
\caption{Synchronous Multi-view Neural Diffusion}
\label{alg:synmdiff}
\begin{algorithmic}[1]
\REQUIRE Multi-view dataset $\mathcal{X}=\{\mathbf{X}^{(v)}\in\mathbb{R}^{N\times D^{(v)}}\}_{v=1}^{V}$;
ground-truth labels $\mathbf{Y}$; learning rate $\eta$; similarity threshold $\theta$; diffusion depth $T$
\ENSURE Predicted labels $\hat{\mathbf{Y}}$

\STATE Construct the global diffusion coefficient matrix $\mathbf{S}$ over the joint multi-view space $\mathcal{X}$

\STATE Obtain the sampled topology $\hat{\mathbf{S}}=\mathbf{S}\odot\mathbf{M}(\theta)$ according to Eq.~\eqref{truncation}

\WHILE{not converged}
    \FOR{$v=1$ to $V$}
        \STATE Align the $v$-th view into a shared latent space through a linear layer
        \STATE $\tilde{\mathbf{X}}^{(v)}= f_{\theta_{v}}(\mathbf{X}^{(v)})$
    \ENDFOR

    \FOR{$i=1$ to $N$}
        \STATE Generate the virtual anchor $\mathbf{h}_i^{(0)}$ for the $i$-th instance

        \STATE Construct the Ego-Net topology $\mathcal{T}_i$ from $\hat{\mathbf{S}}$

        \FOR{$t=0$ to $T-1$}
            \STATE Update the virtual anchor by the centralized diffusion rule in Eq.~\eqref{centralized diffusion}
        \ENDFOR

        \STATE Obtain the final representation $\mathbf{z}_i=\mathbf{h}_i^{(T)}$
    \ENDFOR

    \STATE Compute the prediction\\
    $\hat{\mathbf{Y}} = f_{\phi}(\{\mathbf{z}_i\}_{i=1}^{N})$

    \STATE Compute the cross-entropy loss\\
    $\mathcal{L} = - \sum_{i=1}^{N} \sum_{c=1}^{C} Y_{i,c} \log \hat{Y}_{i,c}$

    \STATE Update all trainable parameters by back-propagation with learning rate $\eta$
\ENDWHILE

\STATE \textbf{return} $\hat{\mathbf{Y}}$
\end{algorithmic}
\end{algorithm}

\section{Experimental Details}\label{app:experiments}
\subsection{Dataset Description}\label{app:datasets}
In our experiments, we selected the following five widely-used cancer cohorts:
\begin{itemize}
\setlength{\emergencystretch}{1em}
    \item \textbf{BRCA} supports breast invasive carcinoma PAM50 subtype classification, comprising three omics data types (mRNA, CNV, and Reverse-Phase Protein Array (RPPA)) and containing 511 samples from four subtypes: Luminal A, Luminal B, TNBC, and HER2(+).
    \item \textbf{LGG} supports grade classification in glioma, comprising three omics data types (DNA methylation, miRNA, and mRNA) and containing 524 samples from two subtypes: Grade 2 and Grade 3.
    \item \textbf{UCEC} targets the classification of Uterine Corpus Endometrial Carcinoma, including three omics data types: DNA methylation, miRNA, and mRNA. It comprises 430 samples categorized into EEA, SEA, and MSEAC subtypes.
    \item \textbf{GBMLGG} targets glioma classification, including three types of omics data: DNA methylation, miRNA, and mRNA. It comprises 511 samples grouped into AST, ODG, and OAC subtypes.
    \item \textbf{TCGA} integrates RNA-seq and CNV data, with the CNV processed using the GISTIC2 method (10,845 samples) and RNA-seq normalized for batch effects (11,060 samples). After filtering, the final dataset contains 9,664 samples from 28 subtypes.
\end{itemize}
Table~\ref{modataset} summarizes the statistics of the datasets.
\begin{table}[!htbp]
\centering
\small
\caption{A brief description of cancer subtype datasets.}
\label{modataset}
\resizebox{\columnwidth}{!}{
\begin{tabular}{cccc}
\toprule
Datasets & \# Samples & \#Features & \#Subtypes\\
\midrule
BRCA & 511 & \makecell{mRNA: 1,000 \\ CNV: 1,000 \\ RPPA: 223} & 4 \\
\midrule
LGG & 524 & \makecell{DNA: 2,000 \\ mRNA: 2,000 \\ miRNA: 548} & 2 \\
\midrule
UCEC & 430 & \makecell{DNA: 2,000 \\ mRNA: 2,000 \\ miRNA: 554} & 3 \\
\midrule
GBMLGG & 511 & \makecell{DNA: 2,000 \\ mRNA: 2,000 \\ miRNA: 548} & 3 \\
\midrule
TCGA & 9,664 & \makecell{gene expression: 17,944 \\ CNV: 17,944} & 28 \\
\bottomrule
\end{tabular}
}
\end{table}

\noindent For the heterogeneous graphs, we selected the following five representative datasets:
\begin{itemize}
\setlength{\emergencystretch}{1em}
    \item \textbf{Freebase} constitutes a subset of the broader Freebase knowledge graph, encompassing entities categorized into four distinct types: movies (M), actors (A), directors (D), and writers (W).
    \item \textbf{DBLP} is sourced from the DBLP citation network platform, with each node characterized by 334 attributes. The dataset categorizes nodes into four types: author, paper, term, and conference. In the experiments, the meta-path set \{APA, APCPA, APTPA\} is applied.
    \item \textbf{IMDB} is derived from an online movie database. We extracted a subset containing four types of nodes: movies (M), actors (A), directors (D), and years (Y). We conduct experiments by the meta-path set \{MAM, MDM, MYM\}.
    \item \textbf{YELP} is a subset derived from a merchant review website with four types of nodes, i.e., business, user, service, and level. We generate the meta-path set \{BUB, BLB, BSB\} to conduct experiments.
    \item \textbf{AMiner} represents an academic heterogeneous graph dataset, comprising three types of nodes—paper (P), author (A), and reference (R)—and four types of edges: PA, AP, PR, and RP. The dataset also includes paper categories as labels. The meta-path set \{PAP, PRP\} is used for experimentation.
\end{itemize}
Table~\ref{hgdataset} summarizes the statistics of the datasets.
\begin{table}[htbp]
\centering
\small
\caption{A brief description of heterogeneous graph datasets.}
\label{hgdataset}
\resizebox{\columnwidth}{!}{
\begin{tabular}{cccccc}
\toprule
Datasets & \# Samples & \#Features & \#Views & Meta-Paths & \#Classes\\
\midrule
FreeBase & 43,854 & 3,492 & 3 & \makecell{MAM \\ MDM \\ MWM} & 4\\
\midrule
DBLP & 27,194 & 334 & 3 & \makecell{APA \\ APCPA \\ APTPA} & 4\\
\midrule
IMDB & 12,722 & 1,232 & 3 & \makecell{MAM \\ MDM \\ MYM} & 3\\
\midrule
Yelp &  3,913 & 2,614 & 3 & \makecell{BUB \\ BSB \\ BLB} & 3\\
\midrule
AMiner & 55,783 & 128 & 2 & \makecell{PAP \\ PRP} & 3\\
\bottomrule
\end{tabular}
}
\end{table}

\noindent For the multi-view classification task, we selected the following twelve datasets:
\begin{itemize}
\setlength{\emergencystretch}{1em}
    \item \textbf{HW} is a handwritten digits dataset with six types of features, including Profile-correlation, Fourier-coefficient, Karhunen-Loeve, Morphological, Intensity-averaged, and Zernike Moment, with dimensions 153, 596, 301, 27, 481, and 157, respectively.

    \item \textbf{Youtube} is a video dataset with audio and visual features, including MFCC, Volume Stream, Spectrogram Stream, Cuboids Histogram, Hist Motion Estimate, and Histogram of Oriented Gradients, with dimensions 2000, 64, 1024, 512, 64, and 647, respectively.

    \item \textbf{Scene15} is an image dataset with 15 categories, using LBP, PHOW, and CENTRIST features, with dimensions 1800, 1180, and 1240, respectively.

    \item \textbf{MITIndoor} is a scene dataset with 5,360 images and 67 categories, using PHOW, LBP, CENTRIST, and deep features, with dimensions 4096, 3600, 1770, and 1240, respectively.

    \item \textbf{IAPR} is an image dataset containing 7,855 samples with two views, where each sample is represented by visual and textual features, with dimensions 100 and 100, respectively.

    \item \textbf{Caltech} is an object recognition dataset with 9,144 images and 102 categories. Each sample is described by six types of features, with dimensions 48, 40, 254, 1,984, 512, and 928, respectively.

    \item \textbf{Animals} is an image dataset containing 10,158 samples with two views, where each sample is represented by two types of visual features, both with dimension 4,096.

    \item \textbf{Espgame} is an image dataset with 11,032 samples and two views, where each sample is represented by visual and textual features, with dimensions 100 and 100, respectively.

    \item \textbf{VGGFace} is a face dataset containing 34,027 images with four views, where each sample is described by multiple deep features, with dimensions 944, 576, 512, and 640, respectively.

    \item \textbf{CIFAR-10} is an image dataset with 50,000 samples and 10 categories. Each sample is represented by three types of features, with dimensions 512, 2,048, and 1,024, respectively.

    \item \textbf{NoisyMNIST} is a handwritten digits dataset containing 70,000 samples with two views, where each sample is represented by two noisy feature representations, both with dimension 784.

    \item \textbf{YTF} is a video face dataset containing 286,006 samples with four views, where each sample is described by multiple feature representations, with dimensions 944, 576, 512, and 640, respectively.
\end{itemize}
Table~\ref{mvdataset} summarizes the statistics of the datasets.

\begin{table}[htbp]
\centering
\caption{A brief description of multi-view test datasets.}
\label{mvdataset}
\resizebox{\columnwidth}{!}{
\begin{tabular}{lcccc}
\toprule
Datasets & \# Samples & \# Views & \# Features & \# Classes \\
\midrule
HW    & 2,000  & 6 &  $153/596/301/481/157/27$ & 10 \\
Youtube & 2,000    & 6 &  $2,000/1,024/64/512/64/647$       & 10  \\
Scene15 & 4,485    & 3 &  $1,800/1,180/1,240$ & 15  \\
MITIndoor   & 5,360 & 4 &  $3,600/1,770/1,240/4,096$           & 67 \\
IAPR      & 7,855  & 2 & $100/100$          & 6 \\
Caltech   & 9,144    & 6 &  $48/40/254/1,984/512/928$     & 102  \\
Animals    & 10,158    & 2 & $4,096/4,096$     & 50  \\
Espgame   & 11,032    & 2 & $100/100$    & 7  \\
\midrule
VGGFace   & 34,027    & 4 & $944/576/512/640$     & 50  \\
CIFAR-10   & 50,000    & 3 & $512/2,048/1,024$     & 10  \\
NoisyMNIST  & 70,000    & 2 & $784/784$     & 10  \\
YTF         & 286,006    & 4 & $944/576/512/640$     & 200  \\
\bottomrule
\end{tabular}
}
\end{table}

\subsection{Compared Methods}\label{app:baselines}
For multi-omics cancer subtyping, we selected seven state-of-the-art baselines:
\begin{itemize}
\setlength{\emergencystretch}{1em}
    \item \textbf{SVM} constructs an optimal hyperplane to separate data points with maximum margin and performs classification on the learned representations.
    \item \textbf{Random Forest} constructs an ensemble of decision trees via random sampling and feature selection and performs classification on the learned representations.
    \item \textbf{DeepMO} \cite{lin2020classifying} employs deep neural networks to integrate multi-omics data for accurate breast cancer subtype classification.
    \item \textbf{MOGONET} \cite{wang2021mogonet} jointly learns omics-specific representations and cross-omics correlations via graph convolutional networks for effective multi-omics classification.
    \item \textbf{MoGCN} \cite{li2022mogcn} integrates multi-omics data via graph convolutional networks with autoencoder-based feature extraction and similarity network fusion for accurate cancer subtype classification.
    \item \textbf{Moanna} \cite{lupat2023moanna} integrates multi-omics data via a semi-supervised autoencoder and multi-task learning network for accurate breast cancer subtype prediction.
    \item \textbf{MOSGAT} \cite{wu2024mosgat} integrates multi-omics data via modality specific graph attention networks and cross-modal attention to capture intra- and inter-omics relationships for classification.

\end{itemize}
For heterogeneous graph learning, we selected the following nine advanced compared algorithms:
\begin{itemize}
\setlength{\emergencystretch}{1em}
    \item \textbf{GCN} \cite{kipf2017semi} models a homogeneous graph neural network. We target adjacency matrices generated by different meta-paths fed into GCN module and aggregate the resulting representation through summation.
    \item \textbf{HAN} \cite{wang2019heterogeneous} employed hierarchical attention (node-level: meta-path neighbor importance; and semantic-level: meta-path relevance) to hierarchically aggregate features, enabling interpretable node embeddings in heterogeneous graphs.
    \item \textbf{DMGI} \cite{park2020unsupervised} constructs contrastive learning between the original network and a corrupted network on each meta-path and adds a consensus regularization to fuse node embeddings from different meta-paths.
    \item \textbf{IGNN} \cite{gu2020implicit} employed fixed-point equilibrium equations with implicit state vectors, leveraging Perron-Frobenius theory for well-posedness and implicit differentiation for training, to capture long-range graph dependencies.
    \item \textbf{MRGCN} \cite{huang2020mr} introduced multi-dimensional convolution via Laplacian tensor eigen-decomposition and generalized tensor products, supporting arbitrary unitary transforms for multi-relational graph node classification.
    \item \textbf{SSDCM} \cite{mitra2021semi} is a multiplexed network structure-aware representation learning method designed to maximize the mutual information between local and contextualized global graph summaries.
    \item \textbf{MHGCN} \cite{yu2022multiplex} learned multi-length heterogeneous meta-path interactions through multi-layer convolution aggregation, integrating multi-relational structural signals and attribute semantics into node embeddings for unsupervised and semi-supervised learning.
    \item \textbf{AMOGCN} \cite{chen2024attributed} constructs a fused multi-order adjacency matrix and automatically explores meta-paths involving multi-hop neighbors under the guidance of node semantic information.
    \item \textbf{HMGE} \cite{abdous2023hierarchical} introduced hierarchical aggregation with non-linear dimension combinations and mutual information maximization to unsupervisedly uncover latent structures in high-dimensional multiplex graphs.
\end{itemize}
The following seven state-of-the-art multi-view classification methods are used for comparison with the proposed model:
\begin{itemize}
\setlength{\emergencystretch}{1em}
    \item \textbf{Co-GCN} \cite{li2020co} combined GCNs and co-training to effectively leverage spectral graph information from multiple views through adaptive use of combined Laplacians, enhancing the exploitation of structural data.
    \item \textbf{PDMF} \cite{xu2023progressive} employed a two-stage strategy: pre-training captured multi-view consistency and complementarity via auxiliary representation decoding; fine-tuning leveraged these relations to learn a robust data-to-comprehensive representation mapping.
    \item \textbf{LGCN-FF} \cite{chen2023learnable} consisted of two stages: the first trained a representation from heterogeneous views, and the second enhanced graph fusion using learnable weights and parameterized activation functions.
    \item \textbf{ECMGD} \cite{lu2024towards} integrated energy-constrained graph diffusion to unify multi-view data, enabling inter and intra-view feature flow with an energy function that guided diffusion toward globally consistent representations in GCNs.
    \item \textbf{TUNED} \cite{huang2025trusted} integrates local and global feature neighborhood structures with evidential learning and Markov random fields for robust multi-view classification under uncertainty.
    \item \textbf{KAMSSM} \cite{liao2026static} models knowledge-aware multi-view dynamics via node behavior selection and directional inter-view diffusion to mitigate interference from low-quality views.
    \item \textbf{CoGFormer} \cite{Lai_Li_Wang_Wang_2026} jointly models local and global structural consensus via a cooperative graph transformer with denoising graph convolution and structure-guided attention for robust multi-view learning.
\end{itemize}

\subsection{Implementation and Training Details}\label{app:implementation}

In this section, we provide training details of the proposed method, including model design and hyperparameter settings.

Specifically, we adopt the $k$-nearest neighbors (kNN) algorithm as a concrete instantiation of the truncation procedure in Eq.~\eqref{truncation} of the main text. This choice is motivated by its ability to capture high-quality geometric priors, while remaining computationally efficient and widely used for constructing topological structures from multi-view features \cite{li2020co,chen2023learnable,shi2025information,wang2025MEGNN}.
For heterogeneous graph node classification, we follow the setting of Lu et al.~\cite{lu2024towards}, where a simple GNN (e.g., GCN) is employed to project heterogeneous graph data into a multi-view feature space.
In training, we use the Adam optimizer with cross-entropy loss. The maximum number of epochs is fixed at 200, and the model with the best validation performance is selected for testing. All results are reported as the mean and standard deviation over five runs with different random seeds. The batch size is set to 128 to ensure efficient training on commonly available single-GPU setups.
For hyperparameter selection, we follow the original settings of all baseline methods as specified in their respective papers.
For the proposed SynMDiff, to simplify tuning, we adopt a unified network configuration across all experiments. Specifically, we use a single-layer architecture, where a 512-dimensional MLP is first applied to align multi-view features, followed by a multi-head attention layer with 4 heads of dimension 128, and a feed-forward network (FFN) with hidden dimension 256. The dropout rate is set to 0.2, and the weight decay is fixed at $1\mathrm{e}{-5}$.
We tune the learning rate, the aggregation order (\textit{hops}), and the neighborhood size ($k$) via validation, as these hyperparameters govern the optimization dynamics and the effective receptive field of the induced topology.
The detailed hyperparameter configurations are summarized in Table~\ref{tab:hyperparameters}.

\begin{table}[htbp]
\centering
\small
\caption{Hyperparameter Configurations}
\label{tab:hyperparameters}
\begin{tabular}{p{1.8cm}>{\centering\arraybackslash}p{1.8cm}>{\centering\arraybackslash}p{0.8cm}>{\centering\arraybackslash}p{0.8cm}}
\toprule
Dataset    & learning rate & $hops$ & $k$  \\
\midrule
BRCA       & 0.0005        & 3    & 10 \\
LGG        & 0.0005        & 0    & 5  \\
UCEC       & 0.0005        & 3    & 10 \\
GBMLGG     & 0.0005        & 3    & 10 \\
TCGA       & 0.0005        & 3    & 10 \\
\midrule
FreeBase   & 0.0005        & 3    & 10 \\
DBLP       & 0.00005       & 6    & 10 \\
IMDB       & 0.0005        & 7    & 10 \\
Yelp       & 0.0001        & 5    & 10 \\
Aminer     & 0.0005        & 3    & 10 \\
\midrule
Scene15    & 0.0005        & 1    & 10 \\
YouTube    & 0.0001        & 3    & 10 \\
MITIndoor  & 0.0005        & 2    & 10 \\
HW         & 0.001         & 2    & 10 \\
IAPR       & 0.001         & 3    & 10 \\
Animals    & 0.0005        & 2    & 10 \\
Caltech    & 0.0005        & 1    & 10 \\
ESPGAME    & 0.0005        & 3    & 10 \\
NoisyMNIST & 0.0005        & 4    & 10 \\
YTF        & 0.0001        & 3    & 10 \\
CIFAR-10   & 0.00005       & 1    & 10 \\
VGGFace    & 0.0005        & 1    & 10 \\
\bottomrule
\end{tabular}
\end{table}

\subsection{Additional Ablation Analysis}
\label{sec:app_detailed_ablation}

Section~\ref{sec:ablation} examines the effects of intra-view propagation, inter-view propagation, and topology sampling. We further evaluate three variants on the HW dataset: \textit{Async.}, which replaces synchronous diffusion with its view-decoupled asynchronous counterpart; \textit{w/o $k$-NN}, which removes $k$-NN topology sampling; and \textit{w/o $\mathbf{h}_i$}, which removes the virtual anchor in the centralized architecture. All variants follow the same experimental setting.

\begin{table}[t]
    \centering
    \caption{Component-wise ablation results.}
    \label{tab:app_detailed_ablation}
    \resizebox{\columnwidth}{!}{%
        \begin{tabular}{lcccc}
            \toprule
            Metric
            & Async.
            & w/o $k$-NN
            & w/o $\mathbf{h}_i$
            & SynMDiff \\
            \midrule
            ACC (\%)  & 96.12   & 95.81   & 96.44          & \textbf{97.06} \\
            Time (ms) & 376.25  & 371.15  & 69.60          & \textbf{55.50} \\
            Mem. (MB) & 5376.90 & 5666.89 & \textbf{99.99} & 101.27 \\
            \bottomrule
        \end{tabular}%
    }
\end{table}

As shown in Table~\ref{tab:app_detailed_ablation}, the asynchronous variant decreases ACC by $0.94$ percentage points while requiring approximately $6.78\times$ more time and $53.1\times$ more memory than SynMDiff. Removing $k$-NN sampling leads to the largest performance drop and increases the time and memory costs by approximately $6.69\times$ and $56.0\times$, respectively. This confirms that dense interactions introduce both computational redundancy and semantically irrelevant information.
Removing the virtual anchor $\mathbf{h}_i$ reduces ACC by $0.62$ percentage points and increases the execution time by approximately $25.4\%$, while its memory consumption remains comparable to that of SynMDiff. These results demonstrate that synchronous diffusion, topology sampling, and virtual-anchor-based information relay jointly contribute to the effectiveness and efficiency of the complete model.

\subsection{Sensitivity to the Neighborhood Size}
\label{sec:app_k_sensitivity}

The neighborhood size $k$ controls the trade-off between information coverage and topology sparsity. We vary $k$ from $5$ to $30$ on the HW dataset while keeping all other configurations unchanged.
\begin{table}[t]
    \centering
    \caption{Sensitivity to the neighborhood size $k$.}
    \label{tab:app_k_sensitivity}
    \resizebox{\columnwidth}{!}{%
        \begin{tabular}{lcccccc}
            \toprule
            $k$       & 5              & 10     & 15             & 20     & 25     & 30     \\
            \midrule
            ACC (\%)  & 95.81          & 97.13  & \textbf{97.25} & 96.25  & 96.31  & 95.88  \\
            Time (ms) & \textbf{25.76} & 27.39  & 28.64          & 29.34  & 29.48  & 30.09  \\
            Mem. (MB) & \textbf{99.52} & 100.71 & 99.61          & 100.73 & 99.54  & 99.54  \\
            \bottomrule
        \end{tabular}%
    }
\end{table}
As reported in Table~\ref{tab:app_k_sensitivity}, SynMDiff achieves the highest ACC of $97.25\%$ at $k=15$, while $k=10$ provides comparable performance with slightly lower execution time. A small $k$ may exclude useful complementary information, whereas a large $k$ introduces less relevant neighbors and additional noise.
The execution time increases moderately from $25.76$ ms to $30.09$ ms as $k$ increases, while memory consumption remains stable within $99.52$--$100.73$ MB. An intermediate neighborhood size therefore provides a favorable balance between information coverage, noise suppression, and computational efficiency.

\section{Further Illustration of Synchronous Fusion}
\label{sec:app_sync_motivation}

Neighborhood structures inferred from different views are often heterogeneous and misaligned. Consequently, two semantically related instances may be connected in one view but disconnected in another. In a view-decoupled fusion framework, such missing relations can interrupt intermediate propagation pathways and prevent complementary information from being effectively exchanged.
Figure~\ref{fig:app_sync_motivation} presents a Weisfeiler-Lehman (WL)-inspired toy example with two semantically related instances, $A$ and $B$. The example contains $A_1$, $A_2$, and $B_2$, where the subscript denotes the view, while $B_1$ is omitted to isolate the relevant interaction pattern. The numerals inside the nodes represent abstract feature tokens rather than ground-truth labels. Solid edges denote available interactions, and the crossed dotted edge denotes a missing relation caused by view-specific structural bias.

\begin{figure}[t]
    \centering
    \includegraphics[
        width=\columnwidth,
    ]{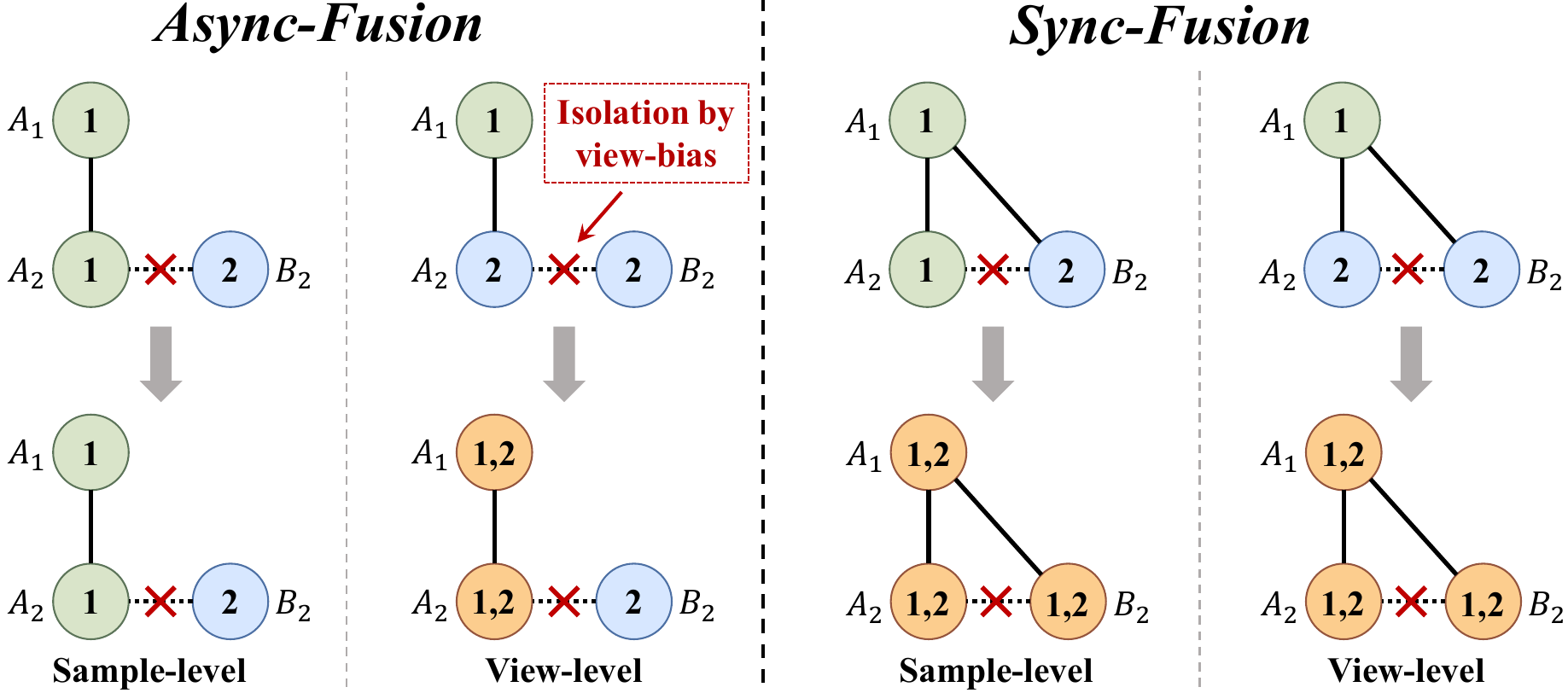}
    \caption{A WL-inspired comparison between asynchronous and synchronous multi-view fusion. Under asynchronous fusion, the missing edge between $A_2$ and $B_2$ interrupts the intermediate propagation pathway and isolates $B_2$ from the complementary information in $A_1$. Synchronous fusion directly models the interaction between $A_1$ and $B_2$, enabling the involved nodes to obtain the joint signature $\{1,2\}$.}
    \Description{A toy graph compares asynchronous and synchronous fusion. A missing edge between A2 and B2 blocks indirect information flow from A1 to B2 in the asynchronous case. A direct cross-view interaction in the synchronous case allows the joint feature signature to reach all involved nodes.}
    \label{fig:app_sync_motivation}
\end{figure}

Under asynchronous fusion, information exchange between $A_1$ and $B_2$ depends on an indirect path through $A_2$. Once the relation between $A_2$ and $B_2$ is absent, the complementary information cannot be propagated at either the sample or view level. The resulting representations are therefore local and incomplete.
In contrast, synchronous fusion operates directly in the joint multi-view space and models the interaction between $A_1$ and $B_2$ without relying on the missing intermediate edge. Both sample- and view-level refinement can consequently recover the joint signature $\{1,2\}$. This WL-inspired example provides an intuitive message-passing interpretation of how synchronous fusion alleviates the restrictions imposed by inconsistent view-specific structures.

\section{Relation to Prior Work and Scope}
\label{sec:app_relation_scope}

SynMDiff is related to prior studies that interpret representation learning as a diffusion process
\cite{chamberlain2021grand,gasteiger2019diffusion,lu2024towards,wudifformer},
where attention mechanisms and graph-based operations are commonly used to estimate diffusion coefficients and implement information propagation. Among these studies, ECMGD~\cite{lu2024towards} is the most closely related multi-view diffusion method. It first performs feature evolution within individual views and then combines the resulting view-specific representations. Despite its diffusion-based interpretation, this sequential organization remains within the View-Decoupled Fusion paradigm, where cross-view interactions depend on intermediate representations shaped by view-specific structural biases. In contrast, SynMDiff formulates the joint multi-view feature space as a unified dynamical system, in which the dual-coupling coefficient $S_{i,j}^{(v,u)}$ directly models the influence of instance $j$ in view $u$ on instance $i$ in view $v$. Intra-view, inter-view, cross-instance, and cross-view interactions are therefore captured within the same diffusion process rather than decomposed into independent stages. To the best of our knowledge, this is the first neural diffusion formulation that models intra- and inter-view interactions in a fully synchronous manner.

Accordingly, the scope of this work primarily concerns the formulation of multi-view fusion rather than the development of a new attention mechanism or graph neural-network backbone. Attention-based diffusivity estimation and graph propagation are adopted as established tools for realizing the generalized diffusion flow, while our focus is the transition from view-decoupled information propagation to globally coupled synchronous fusion. To make joint diffusion computationally practical, we further introduce energy-based topology sampling to remove redundant or semantically irrelevant diffusion pathways and a virtual-anchor architecture to avoid exhaustive cross-view traversal. Together, these designs provide an efficient and scalable realization of the synchronous formulation without altering its central modeling principle.

\end{document}